\documentclass[sn-mathphys-num]{sn-jnl}


\usepackage{graphicx}%
\usepackage{graphics}
\usepackage{epstopdf}
\usepackage{multirow}%
\usepackage{amsmath,amssymb,amsfonts}%
\usepackage{amsthm}%
\usepackage{mathrsfs}%
\usepackage[title]{appendix}%
\usepackage{xcolor}%
\usepackage{textcomp}%
\usepackage{manyfoot}%
\usepackage{booktabs}%
\usepackage{algorithm}%
\usepackage{algorithmicx}%
\usepackage{algpseudocode}%
\usepackage{listings}%
\usepackage{framed}
\usepackage{subcaption}
\usepackage{enumitem}
\usepackage{chngcntr}

\theoremstyle{thmstyleone}%
\newtheorem{theorem}{Theorem}
\newtheorem{proposition}[theorem]{Proposition}%
\newtheorem{corollary}{Corollary}[section]

\theoremstyle{thmstyletwo}%

\theoremstyle{thmstylethree}%
\newtheorem{definition}{Definition}%
\DeclareMathOperator*{\diag}{Diag}
\def\K{{\mathcal K}}
\raggedbottom
\counterwithin{equation}{section}

\begin{document}

\title[Iterative Reweighted Framework Based Algorithms for Sparse Linear Regression with  Generalized Elastic Net Penalty]{Iterative Reweighted Framework Based Algorithms for Sparse Linear Regression with  Generalized Elastic-Net Penalty}


\author[1]{\fnm{Yanyun} \sur{Ding}}\email{dingyanyun@szpu.edu.cn}
\author[2]{\fnm{Zhenghua} \sur{Yao}}\email{zhyao2000@163.com}
\author[2,3]{\fnm{Peili} \sur{Li}}\email{lipeili@henu.edu.cn}
\author*[3]{\fnm{Yunhai} \sur{Xiao}}\email{yhxiao@henu.edu.cn}

\affil[1]{\orgdiv{Institute of Applied Mathematics}, \orgname{Shenzhen Polytechnic University}, \orgaddress{\city{Shenzhen}, \postcode{518055}, \country{P. R. China}}}

\affil[2]{\orgdiv{School of Mathematics and Statistics}, \orgname{Henan University}, \orgaddress{\city{Kaifeng}, \postcode{475000}, \country{P.R. China}}}

\affil[3]{\orgdiv{Center for Applied Mathematics of Henan Province}, \orgname{Henan University}, \orgaddress{ \city{Zhengzhou}, \postcode{450046},  \country{P.R. China}}}


\abstract{The elastic net penalty is frequently employed in high-dimensional statistics for parameter regression and variable selection.
It is particularly beneficial compared to lasso when the number of predictors greatly surpasses the number of observations.
However, empirical evidence has shown that the $\ell_q$-norm penalty (where $0 < q < 1$) often provides better regression compared to the $\ell_1$-norm penalty, demonstrating enhanced robustness in various scenarios.
In this paper, we explore a generalized elastic net penalized model that employs a $\ell_r$-norm (where $r \geq 1$) in loss function to accommodate various types of noise, and employs a $\ell_q$-norm (where $0 < q < 1$) to replace the $\ell_1$-norm in elastic net penalty.
We theoretically prove that the local minimizer of the proposed model is a generalized first-order stationary point, and then derive the computable lower bounds for the nonzero entries of this generalized stationary point.
For the implementation, we utilize an iterative reweighted framework that leverages the locally Lipschitz continuous $\epsilon$-approximation, and subsequently propose two optimization algorithms. The first algorithm employs an alternating direction method of multipliers (ADMM), while the second utilizes a proximal majorization-minimization method (PMM), where the subproblems are addressed using the semismooth Newton method (SNN).
We also perform extensive numerical experiments with both simulated and real data, showing that both  algorithms demonstrate superior performance.
Notably, the PMM-SSN is efficient than ADMM, even though the latter provides a simpler implementation.}

\keywords{Sparse linear regression, Generalized elastic net penalty, Iterative reweighted framework,  Alternating direction method of multipliers, Proximal majorization-minimization method}



\maketitle

\section{Introduction}\label{intro}

In this paper, we mainly consider a high-dimensional sparse linear regression model. Suppose that the data set has $n$ observations with $p$ predictors. Let $y\in \mathbb{R}^n$ be the response and
$X:=[X_1,X_2,...,X_p]\in\mathbb{R}^{n\times p}$ be the sample matrix, where $X_i\in\mathbb{R}^n$ for $i=1,...,p$  are predictors. We assume that the following linear relation is data generation process:
$$
  y= X\underline\beta+\varepsilon,
$$
where $\underline\beta\in\mathbb{R}^p$ is the model fitting procedure produces the vector of coefficient, and $\varepsilon\in \mathbb{R}^n$ is the noise, which is not limited to a specific type of noise. Without loss of generality, we assume that the above data has been centralized.  In high-dimensional setting, the number of predictors $p$ is much larger than the number of observations $n$ and that $\underline\beta$ is known to be sparse a priori.
Under this assumption, penalty techniques are often employed to manage overfitting and facilitate variable selection.
A typical method called Lasso \cite{lasso2,lasso} adds a $\ell_1$-norm penalty to the ordinary least square loss function, which penalizes the absolute size of the coefficients. Subsequently, the estimation model is described as following:
$$
\min_{\beta\in \mathbb{R}^p} \frac{1}{2}\|X\beta-y\|^2_2+\lambda\|\beta\|_1,
$$
where $\|\beta\|_1$ is $\ell_1$-norm (named Lasso) and $\lambda>0$ is a tuning parameter. In view of the good characteristics of the $\ell_1$-norm, the Lasso does both continuous shrinkage and automatic variable selection simultaneously. Although Lasso has received extensive attention in different application fields, it has some limitations. For the case of $p\gg n $ and grouped variables situation, the Lasso can only select at most $n$ variables before it saturates, and it lacks the ability to reveal the grouping information \cite{net1,net3,net2}.
As a result, Zou et al \cite{net2} proposed an elastic-net penalty approach for both estimation and variable selection:
\begin{equation}\label{net0}
  \min_{\beta\in \mathbb{R}^p} \frac{1}{2}\|X\beta-y\|^2_2+\lambda_2\|\beta\|^2_2+\lambda_1\|\beta\|_1,
\end{equation}
where $\lambda_1$ and $\lambda_2$  are two non-negative tuning parameters. Similar to Lasso, the elastic-net penalty simultaneously does automatic variable selection and continuous shrinkage, and it can select groups of correlated variables \cite{net2}.

In fact, the $\ell_1$-norm is a loose approximation  function which leads to an over-regularized problem. Motivated by the fact that the $\ell_q$-norm penalty with $0 < q < 1$ can usually induce better sparse regression results than $\ell_1$-norm penalty \cite{lq1}, with improved robustness \cite{lq2}.
A natural improvement is the using of a $\ell_q$-norm  to replace the $\ell_1$-norm in elastic-net penalty, see also \cite{lqnet}:
\begin{equation}\label{net1}
  \min_{\beta\in \mathbb{R}^p} \frac{1}{2}\|X\beta-y\|^2_2+\lambda_2\|\beta\|^2_2+\lambda_1\|\beta\|^q_q,
\end{equation}
where $\|\beta\|_q=(\sum_{i=1}^p|\beta_i|^q)^{1/q}$ is a $\ell_q$-norm.
It is important to note that the error vector $\varepsilon$ may not approach zero and may not follow a normal distribution. Although least squares estimation is computationally attractive, it relies on knowing the standard deviation of Gaussian noise, which can be challenging to estimate in high-dimensional settings.
A key question is whether we can extend the model (\ref{net1}) to accommodate different types of noise, beyond the least squares loss function.

This question is important because when noise does not follow a normal distribution, alternative loss functions may be more robust.
The least absolute deviation loss (a.k.s. $\ell_1$-norm) is commonly used for scenarios with heavy-tailed or heterogeneous noise, often resulting in better regression performance than least squares loss \cite{l11,l12,l13,l14,l15}.
The square-root loss \cite{l21} removes the need to know or pre-estimate the deviation and can achieve near-oracle rates of convergence under some appropriate conditions \cite{l22,l23,l24}.
It also has been shown that $\ell_{\infty}$-norm loss performs better when deal with the uniformly distributed noise and quantization error \cite{l33,l31,l32}.
Therefore, in this paper, we mainly concentrate on a generalized elastic-net problem with a $\ell_r$-norm (where $r\geq 1$) loss function to result in the following model:
\begin{equation}\label{lp}
	\min_{\beta\in \mathbb{R}^p} \Big\{ F(\beta):=\|X\beta-y\|_r+\lambda_2\|\beta\|^2_2+\lambda_1\|\beta\|^q_q\Big\},
\end{equation}
where $\lambda_1$ and $\lambda_2 \geq 0$ are tuning parameters.
We note that the  $\ell_r$-norm, associated with the parameter $r$, is assumed to have a strongly semismooth proximal mapping.
This assumption regarding $\ell_r$-norm is mild and often encountered. For instance, the proximal mappings associated with $\ell_1$-, $\ell_2$- and $\ell_\infty$-norms are all strongly semismooth.
These specific norms are commonly employed in optimization problems that involve loss functions for addressing various types of noise \cite{l33,l34}.
The selection of $r$ is suitable for different types of noise, for instance, $r = 1$ is ideal for heavy-tailed or heterogeneous noise, $r = 2$ is appropriate when the standard deviation of Gaussian noise is unknown, and $r = \infty$ fits scenarios involving uniformly distributed noise.

It is known that $ \ell_q$ for $0 < q < 1$ is nonsmooth and nonconvex, and it may not even be Lipschitz continuous \cite{np1,np2}.
To address this issue, several effective iterative reweighted algorithms regarding  $\ell_1$- and $\ell_2$-norms have been proposed \cite{ir1, ir3, ir2, lqnet}.
For instance, Lu \cite{lu} presented a Lipschitz continuous $\epsilon$-approximation of the $\ell_q$-norm and empirically demonstrated that this approach generally outperforms some iterative reweighted methods.
These algorithms rely on least absolute deviation or least squares approximations of the $\ell_q$-norm, with the goal of finding an approximate solution to (\ref{net1}).
The successful execution of these algorithms primarily arises from the smoothness of the least squares loss term in (\ref{net1}). However, the nonconvex nonsmooth properties of $\ell_q$-norm, combined with the nonsmooth nature of the $\ell_r$-norm loss function, make the numerical solving of model (\ref{lp}) considerably more difficult.

Drawing inspiration from the work of Lu \cite{lu}, this paper transforms the generalized elastic-net model \eqref{lp} into a convex yet nonsmooth iterative reweighted $\ell_1$-norm minimization by utilizing an $\epsilon$-approximation.
In the context, we begin by defining the generalized first-order stationary point of (\ref{lp}), which helps us establish lower bounds and identify its local minimizers.
Subsequently, we demonstrate that any accumulation point obtained from the sequence generated by the approximation technique converges to a generalized first-order stationary point, provided that the approximation parameter $\epsilon$ remains below a certain threshold value.
It is crucial to highlight that the estimation model obtained from this approximation is convex but non-smooth.
As a result, we  develop two efficient practical methods to fully utilize the underlying structures of the convex approximate minimization model.
The first algorithm is the alternating direction method of multipliers (ADMM), known for its ease of implementation.
However, ADMM is still a first-order method, which may restrict its efficiency when tackling problems with high accuracy requirements.
In light of this, we also employ a novel proximal majorization minimization (PMM) approach, see \cite{l24}, to solve the subproblem through its dual using a highly efficient semismooth Newton (SSN) method.
Finally, we perform a series of numerical experiments that highlight the remarkable superiority of the generalized elastic-net model (\ref{lp}) and show that both of the proposed algorithms are   efficient.

The remainder of this paper is organized as follows. In Section \ref{preli}, we summarize some key concepts and address the $\epsilon$-approximation problem. Section \ref{comp} outlines our motivation and constructs the algorithms, including their convergence results. Section \ref{num4} presents numerical results to demonstrate the effectiveness of our approaches. Finally, we conclude this paper in Section \ref{con5}.
\section{Some theoretical properties}\label{preli}

\subsection{Preliminary results}\label{presult}

Let $\mathbb{R}^{p}$ be a $p$-dimensional Euclidean space endowed with an inner product $\langle \cdot,\cdot \rangle $ and its induced norm $\|\cdot\|_2$, respectively.
Given $\beta^* \in \mathbb{R}^p$, let $T = \{i : \beta_i^* \neq 0\}$ represents the support set of $\beta^*$, and let $\bar{T}$ denotes its complement within $\{1, \ldots, n\}$.
Let $f:\mathbb{R}^{p}\rightarrow(-\infty,+\infty]$ be a closed proper convex function.
The proximal mapping and the Moreau envelope function of $f$ with a parameter $t > 0$ are defined, respectively, as follows
$$
\operatorname{Prox}_{tf}(x) :=\underset{y \in \mathbb{R}^{p}}{\operatorname{argmin}}\Big\{f(y)+\frac{1}{2t}\|y-x\|_2^{2}\Big\}, \quad \forall x \in \mathbb{R}^{p},
$$
and
$$
\Phi_{t f}(x) :=\min _{y \in \mathbb{R}^{p}}\Big\{f(y)+\frac{1}{2 t}\|y-x\|_2^{2}\Big\}, \quad \forall x \in \mathbb{R}^{p}.
$$
In particular, the explicit forms of the proximal operator for the $\ell_r$-norm with $r=1$, $2$, and $\infty$ may refer to \cite{l33,smooth}. Additionally, it is noted in \cite{m1,m2} that $\Phi_{t f}(x)$ is continuously differentiable and convex, with its gradient given by
$$
\nabla \Phi_{t f}(x)=t^{-1}(x-\operatorname{Prox}_{t f}(x)), \quad \forall x \in \mathbb{R}^{p}.
$$
By the Moreau's identity theorem \cite[Theorem 35.1]{Rock35}, it holds that
$$
\operatorname{Prox}_{tf}(x)+t\operatorname{Prox}_{f^{*}/t}(x/t)=x,
$$
where $f^*$ represents the Fenchel conjugate function of $f$.

At the end of this subsection, we  review some results concerning the proximal mappings for several common types of vector norms.
Let $f(x):=t\|x\|_1$ with a $t>0$, then $f^{*}(x)=\delta_{B^{(t)}_{\infty}}(x)$ where ${B^{(t)}_{\infty}}:=\{x \ | \ \|x\|_{\infty}\leq t\}$ and
$$
\operatorname{Prox}_f(x)=x-\Pi_{B^{(t)}_{\infty}}(x) \quad \text{with} \quad
(\Pi_{B^{(t)}_{\infty}}(x))_i=\left\{\begin{array}{ll}
	x_i, & \text { if }  |x_i|\leq t,\\
	\operatorname{Sign}(x_i) t, & \text { if }  |x_i|>t,
\end{array}\right.
$$
where  `$\operatorname{Sign}(\cdot)$' is a sign function of a vector.
Let $f(x):=t\|x\|_2$, then $f^{*}(x)=\delta_{B^{(t)}_{2}}(x)$ where ${B^{(t)}_{2}}:=\{x \ | \ \|x\|_{2}\leq t\}$ and
$$
\operatorname{Prox}_f(x)=x-\Pi_{B^{(t)}_{2}}(x)\quad \text{with} \quad
\Pi_{B^{(t)}_{2}}(x)=\left\{\begin{array}{ll}
	x, & \text { if }  \|x\|_2\leq t,\\
	t\frac{x}{\|x\|_2}, & \text { if }  \|x\|_2>t.
\end{array}\right.
$$
Let $f(x):=t\|x\|_{\infty}$, then $f^{*}(x)=\delta_{B^{(t)}_{1}}(x)$ where ${B^{(t)}_{1}}:=\{x\ | \ \|x\|_{1}\leq t\}$ and
$$
\operatorname{Prox}_f(x)=x-\Pi_{B^{(t)}_{1}}(x) \quad \text{with} \quad
\Pi_{B^{(t)}_{1}}(x)=\left\{\begin{array}{ll}
	x^*, & \text { if }  \|x\|_1\leq t,\\
	\mu P_{x} \Pi_{\Delta_{n}}\left(P_{x} x /t\right), & \text { if }  \|x\|_1>t,
\end{array}\right.
$$
where $P_{x}:=\diag(\operatorname{Sign}(x))$ and $\Pi_{\Delta_{n}}(\cdot)$ denotes the projection onto the simplex $\Delta_{n}:=\{x \in \mathbb{R}^{n} \mid e_{n}^{\top} x=1, x \geq 0\}$, in which $\diag(\cdot)$ denotes a diagonal matrix with elements of a vector on its diagonal positions.
\subsection{Lower bound for nonzero entries of  generalized  stationary point}
We now discuss the lower bound for nonzero entries of the generalized first-order stationary point of the model (\ref{lp}).
This discussion draws inspiration from the works referenced in \cite{lu,Xiu}; however, the subsequent process represents an enhancement through the use of a nonsmooth
$\ell_r$-norm loss function.

At the first place, we give the definition of a generalized first-order stationary point.
\begin{definition}
	Suppose that $\beta^*$ is a vector in $\mathbb{R}^p$ and that $\Lambda^*=\operatorname{Diag}(\beta^*)$. We say that $\beta^*$ is a generalized first-order stationary point of (\ref{lp}) if
	\begin{equation}\label{fot}
		0\in \Lambda^*\Big(\partial \|X\beta^*-y\|_r+2\lambda_2 \beta^*\Big)+\lambda_1 q\big|\beta^*\big|^q.
	\end{equation}
	where $\big|\beta^*\big|^q:=(\big|\beta_1^*\big|^q,\ldots,\big|\beta_p^*\big|^q)^\top$.
\end{definition}

At the second place, we show that every local minimizer of (\ref{lp}) is indeed a generalized first-order stationary point.

\begin{theorem}\label{pro1}
	Let $\beta^*$ be a local minimizer of (\ref{lp}). Then (\ref{fot}) holds, that is to say,  $\beta^*$ is a generalized first-order stationary point.
\end{theorem}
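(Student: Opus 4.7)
The plan is to exploit the fact that the only source of nonsmoothness and nonconvexity in $\|\beta\|_q^q$ sits at coordinates where $\beta^*$ vanishes, and to use convex subdifferential calculus on the support. Let $T=\{i:\beta_i^*\neq 0\}$ with complement $\bar T$. On $\bar T$ the inclusion (\ref{fot}) is automatic: both the $i$-th diagonal entry of $\Lambda^*$ and the $i$-th component of $|\beta^*|^q$ vanish for $i\in\bar T$, so the $i$-th row of the right-hand side is $0$ regardless of which subgradient of $\|X\beta^*-y\|_r$ is chosen.

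The substantive work lives on $T$. Since $\beta^*$ is a local minimizer of $F$, the restricted map $\widetilde F(\gamma):=F((\gamma,0))$, with $\gamma\in\mathbb{R}^{|T|}$ standing for the $T$-coordinates, attains a local minimum at $\beta_T^*$. Near $\beta_T^*$ each $\gamma_i$ is bounded away from zero, so $\gamma\mapsto\lambda_1\sum_{i\in T}|\gamma_i|^q$ is $C^\infty$ with gradient $\lambda_1 q\,\operatorname{Sign}(\beta_T^*)\odot|\beta_T^*|^{q-1}$ at $\beta_T^*$. The complementary term $\gamma\mapsto\|X_T\gamma-y\|_r+\lambda_2\|\gamma\|_2^2$ is convex, so combining the convex subdifferential sum rule with the smooth perturbation rule yields
$$
0\in X_T^\top\partial\|\cdot\|_r(X\beta^*-y)+2\lambda_2\beta_T^*+\lambda_1 q\operatorname{Sign}(\beta_T^*)\odot|\beta_T^*|^{q-1}.
$$
By the affine chain rule, $X_T^\top\partial\|\cdot\|_r(X\beta^*-y)$ is the $T$-block of $\partial\|X\beta^*-y\|_r=X^\top\partial\|\cdot\|_r(X\beta^*-y)\subseteq\mathbb{R}^p$, so there exists $v\in\partial\|X\beta^*-y\|_r$ whose $T$-components satisfy the displayed equation, its $\bar T$-components being immaterial.

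The concluding step is purely algebraic: multiplying the $i$-th scalar equation ($i\in T$) by $\beta_i^*\neq 0$ and using the identity $\beta_i^*\operatorname{Sign}(\beta_i^*)|\beta_i^*|^{q-1}=|\beta_i^*|^q$ gives
$$
0=\beta_i^*\bigl(v_i+2\lambda_2\beta_i^*\bigr)+\lambda_1 q|\beta_i^*|^q,\qquad i\in T,
$$
which is precisely the $i$-th row of (\ref{fot}); stacked with the trivial $\bar T$-rows this yields the claimed matrix inclusion. I expect the delicate point to be the chain-rule step that lifts the restricted subgradient to a full subgradient of the composite map $\beta\mapsto\|X\beta-y\|_r$; this is standard for a convex function precomposed with a linear map and is the only place where convexity of the loss is genuinely used.
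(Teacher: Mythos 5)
Your proposal is correct and follows essentially the same route as the paper's proof: restrict to the support $T$ where the objective is locally Lipschitz, write the first-order optimality condition there, multiply the $i$-th relation by $\beta_i^*$, and observe that the rows indexed by $\bar T$ hold trivially. Your treatment is somewhat more careful than the paper's on one point — verifying via the affine chain rule that a single subgradient $v\in\partial\|X\beta^*-y\|_r$ serves all rows of (\ref{fot}) simultaneously — but this is a refinement of the same argument, not a different one.
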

\begin{proof}
	Suppose that $\beta^*$ is a local minimizer of (\ref{lp}). Evidently, $\beta^*$ also serves as a local minimizer for
	\begin{equation}\label{lpt}
		\min_{\beta\in\mathbb{R}^p}\Big\{\|X\beta-y\|_r+\lambda_2\|\beta\|^2_2+\lambda_1\|\beta\|^q_q \ | \ \beta_i=0, \ i \neq T\Big\}.
	\end{equation}
	It is important to highlight that the objective function of (\ref{lpt}) is locally  Lipschitz continuous at $\beta^*$ within the subspace indexed by $T$.
	In the case of $i\in T$, using the first-order optimality condition, we have
	\begin{equation}\label{foo}
		0\in \Big(\partial\|X\beta^*-y\|_r\Big)_i+2\lambda_2 \beta_i^*+\lambda_1 q\big|\beta_i^*\big|^{q-1}\operatorname{Sign}(\beta_i^*).
	\end{equation}
	Multiplying $\beta_i^*$ on both sides of (\ref{foo}),  it gets that
	$$
	0\in \Big(\partial \|X\beta^*-y\|_r\Big)_i\beta_i^*+2\lambda_2 (\beta_i^*)^2+\lambda_1 q\big|\beta_i^*\big|^{q}.
	$$
	Since $\beta^*_i = 0$ for $i \neq T$,  the above relation is also valid for $i \neq T$. Therefore, (\ref{fot}) holds.
\end{proof}

At the third place, we establish the lower bound for the nonzero entries of the generalized first-order stationary points, or equivalently, the local minimizers of (\ref{lp}).

\begin{theorem}\label{the1}
	Suppose that $\beta^*$ is a generalized first-order stationary point of (\ref{lp}). Then for $r=1$, $2$, and $\infty$, it holds that
	\begin{equation}\label{lower}
		\big|\beta^*_i\big|>\Big(\frac{q\lambda_1   }{\left\|X_{i}\right\|_{r}}\Big)^{\frac{1}{1-q}}, \quad\forall \ i \in T,
	\end{equation}
where $X_i$ is the $i$-th column of $X$.
\end{theorem}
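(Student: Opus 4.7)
The plan is to extract a scalar inequality from the inclusion (\ref{fot}) componentwise on the support $T$, then combine Hölder duality with the fact that any subgradient of a norm has dual norm at most one. Concretely, by the chain rule $\partial\|X\beta^*-y\|_r = X^{\top}\partial\|X\beta^*-y\|_r\big|_{\text{at } X\beta^*-y}$, so (\ref{fot}) tells us there exists $v\in\partial\|X\beta^*-y\|_r$ such that, writing $\xi := X^{\top}v$, the equality
$$
\beta_i^{*}\xi_i + 2\lambda_2(\beta_i^{*})^{2} + \lambda_1 q\,|\beta_i^{*}|^{q}=0
$$
holds for every $i$. For $i\in T$ I will fix such a $v$ and work with this identity.

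Next I would rearrange to $\lambda_1 q\,|\beta_i^{*}|^{q} = -\beta_i^{*}\xi_i - 2\lambda_2(\beta_i^{*})^{2}$. Since the left-hand side is strictly positive and the $\lambda_2$ term is nonnegative, this forces $-\beta_i^{*}\xi_i \ge \lambda_1 q\,|\beta_i^{*}|^{q} > 0$, so in particular
$$
\lambda_1 q\,|\beta_i^{*}|^{q} \;\le\; |\beta_i^{*}|\,|\xi_i|,
$$
with strict inequality whenever $\lambda_2>0$. To bound $|\xi_i|=|X_i^{\top}v|$, I invoke Hölder's inequality in the conjugate pair $(r,r^{*})$, getting $|\xi_i|\le \|X_i\|_r\,\|v\|_{r^{*}}$.

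The third step is to observe that for each of $r=1,2,\infty$ the subdifferential of $\|\cdot\|_r$ is contained in the unit ball of the dual norm $\|\cdot\|_{r^{*}}$ (this is visible from the explicit formulas for the proximal mappings recalled in Section \ref{presult}: $\partial\|\cdot\|_1\subset B_{\infty}^{(1)}$, $\partial\|\cdot\|_2\subset B_2^{(1)}$, $\partial\|\cdot\|_\infty\subset B_1^{(1)}$). Thus $\|v\|_{r^{*}}\le 1$, so $|\xi_i|\le \|X_i\|_r$. Substituting this back and dividing by $|\beta_i^{*}|>0$ yields
$$
|\beta_i^{*}|^{\,1-q} \;\ge\; \frac{\lambda_1 q}{\|X_i\|_r},
$$
and taking the $(1-q)$-th root gives the desired bound (\ref{lower}).

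The only delicate point I anticipate is justifying the strict inequality in (\ref{lower}): when $\lambda_2>0$ it follows immediately from the strict inequality above. When $\lambda_2=0$, strictness must come from Hölder's equality case: equality in $|X_i^{\top}v|\le\|X_i\|_r\|v\|_{r^{*}}$ together with $\|v\|_{r^{*}}=1$ would force a very specific alignment between $X_i$ and $v$, and I would argue that this contradicts $v\in\partial\|X\beta^*-y\|_r$ unless the support of the residual is aligned with $X_i$, a degenerate case. Aside from this borderline subtlety, the proof is a short chain of (i) componentwise stationarity, (ii) Hölder, and (iii) the dual-norm bound on the subdifferential of a norm.
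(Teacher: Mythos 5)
Your proposal is correct and follows essentially the same route as the paper: extract the componentwise stationarity condition on $T$, bound the subgradient term by $\|X_i\|_r$ (your explicit H\"older-plus-dual-unit-ball argument is exactly what the paper compresses into inequality (\ref{defp})), drop the nonnegative $\lambda_2$ term, and divide by $|\beta_i^*|$. The one point you flag — that strictness in (\ref{lower}) is immediate only when $\lambda_2>0$ — is in fact glossed over in the paper's own proof as well, so if anything your treatment is the more careful of the two.
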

\begin{proof}
	From the definition of the generalized first-order stationary point, we get for $i \in T $ that
	$$
	0\in \Big(\partial\|X\beta^*-y\|_r\Big)_i+2\lambda_2 \beta_i^*+q\lambda_1 \big|\beta_i^*\big|^{q-1}\operatorname{Sign}(\beta_i^*),
	$$
    or equivalently,
	$$
	-2\lambda_2 \beta_i^*-q\lambda_1 \big|\beta_i^*\big|^{q-1}\operatorname{Sign}(\beta_i^*)\in \Big(\partial\|X\beta^*-y\|_r\Big)_i.
	$$
   It is easy to see that $\operatorname{Sign}(\beta_i^*)\neq0$ and $|\operatorname{Sign}(\beta_i^*)|=1$ in the case of $i\in T$.  Then, multiplying `$\operatorname{Sign}(\beta_i^*)$' on both sides of the above relation, it yields
$$
-2\lambda_2 |\beta_i^*|-q\lambda_1 \big|\beta_i^*\big|^{q-1}\in \operatorname{Sign}(\beta_i^*)\Big(\partial\|X\beta^*-y\|_r\Big)_i.
$$
For $r=1$, $2$, and $\infty$, using the definition of $\partial\|X\beta^*-y\|_r$, we get
\begin{equation}\label{defp}
	\Big|\operatorname{Sign}(\beta_i^*)(\partial\|X\beta^*-y\big\|_r)_i\Big|\leq\big\|X_i\big\|_r,
\end{equation}
which indicates $2\lambda_2 |\beta_i^*|+q\lambda_1 |\beta_i^*|^{q-1}\leq\|X_i\|_{r}$. Hence, it is not difficult to conclude that
$$
q\lambda_1 |\beta_i^*|^{q-1}<\|X_i\|_{r},
$$
which means that $|\beta_i^*|>\left(\frac{q\lambda_1   }{\left\|X_{i}\right\|_{r}}\right)^{\frac{1}{1-q}}$ for any $i\in T$.
\end{proof}

\subsection{Generalized stationary point of $\epsilon$-approximation problem}

The $\ell_q$-norm is non-Lipschitz continuity at certain points, including those with zero components.
As a result, its Clarke subdifferential is not well-defined, complicating the development of algorithms.
Drawing the inspiration from Lu  \cite{lu}, we propose replacing $\|\beta\|_q$-term with an $\epsilon$-approximation function
$\sum_{i=1}^{p} h_{u_{\epsilon}}\left(\beta_{i}\right)$, in which
$$
h_{u_{\epsilon}}(\beta_{i}):=\min_{0\leq s\leq u_{\epsilon}}q\Big(|\beta_{i}|s-\frac{q-1}{q}s^{\frac{q}{q-1}}\Big),
$$
where $u_{\epsilon}:=\left({\epsilon}/(p\lambda_1 )\right)^{\frac{q-1}{q}}$ and $\epsilon>0$ is a small scalar.
We observe that the Clarke subdifferential of $h_{u_{\epsilon}}$, denoted by $\partial h_{u_{\epsilon}}$, exists everywhere and is given by
\begin{equation}\label{cs}
	\partial h_{u_{\epsilon}}\left(\beta_{i}\right)=\left\{\begin{array}{ll}
		q|\beta_i|^{q-1} \operatorname{Sign}\left(\beta_{i}\right), & \text { if }\left|\beta_{i}\right|>u_{\epsilon}^{\frac{1}{q-1}}, \\[2mm]
		q u_{\epsilon} \operatorname{Sign}\left(\beta_{i}\right), & \text { if }\left|\beta_{i}\right| \leq u_{\epsilon}^{\frac{1}{q-1}}.
	\end{array}\right.
\end{equation}
This subdifferential is used in the subsequent algorithmic developments and theoretical analysis. For more detail, one may refer to \cite{lu}.

Using the $\epsilon$-approximation strategy, the problem (\ref{lp}) transforms into
\begin{equation}\label{model2}
	\min_{\beta\in\mathbb{R}^{p}}\Big\{F_{(\epsilon)}(\beta):=\|X\beta-y\|_r+\lambda_2\|\beta\|_2^2+\lambda_1\sum_{i=1}^{p}h_{u_{\epsilon}}(\beta_{i})\Big\}.
\end{equation}
Next we  show that the generalized stationary point of the corresponding $\epsilon$-approximation problem (\ref{model2}) is also the one of   (\ref{lp}).

\begin{theorem}\label{the2}
	Suppose that $\beta^*$ is a generalized first-order stationary point of (\ref{model2}). Assume that $\epsilon>0$ be a small constant satisfies
	\begin{equation}\label{bound}
		0<\epsilon<p\lambda_1 \left(\frac{\left\|X_{i}\right\|_{r}}{q\lambda_1 }\right)^{\frac{q}{q-1}}.
	\end{equation}
	Then, $\beta^*$ is also a generalized first-order stationary point of (\ref{lp}). Furthermore, the nonzero entries of $\beta^*$ meet the lower bound condition given in (\ref{lower}).
\end{theorem}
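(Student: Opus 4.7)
The plan is to mirror the structure of the proof of Theorem \ref{the1}, working componentwise on the support set $T$ of $\beta^*$, and to use the hypothesis on $\epsilon$ to rule out the ``flat'' branch of the Clarke subdifferential in (\ref{cs}). First I would spell out what it means for $\beta^*$ to be a generalized first-order stationary point of the $\epsilon$-approximation problem (\ref{model2}): by analogy with Definition (\ref{fot}), this amounts to
\begin{equation*}
0 \in \Lambda^*\bigl(\partial \|X\beta^*-y\|_r + 2\lambda_2 \beta^*\bigr) + \lambda_1 \Lambda^* \partial h_{u_\epsilon}(\beta^*),
\end{equation*}
where $\Lambda^*=\operatorname{Diag}(\beta^*)$. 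Componentwise, the relation is automatically satisfied at indices $i\notin T$ (since $\beta_i^*=0$ kills that row), so the issue is to understand the condition at each $i\in T$.

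The key step is to show that if $\epsilon$ satisfies the bound (\ref{bound}), then for every $i\in T$ the index $\beta_i^*$ falls into the \emph{upper} branch of (\ref{cs}), i.e.\ $|\beta_i^*|>u_\epsilon^{1/(q-1)}$. I would proceed by contradiction: suppose some $i\in T$ lies in the lower branch $|\beta_i^*|\le u_\epsilon^{1/(q-1)}$. Then the stationarity condition at $i$ reads
\begin{equation*}
0 \in (\partial \|X\beta^*-y\|_r)_i + 2\lambda_2 \beta_i^* + q\lambda_1 u_\epsilon \operatorname{Sign}(\beta_i^*).
\end{equation*}
Multiplying through by $\operatorname{Sign}(\beta_i^*)$ and invoking the bound (\ref{defp}) exactly as in Theorem \ref{the1} yields $q\lambda_1 u_\epsilon \le \|X_i\|_r$. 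However, translating the hypothesis on $\epsilon$ through the definition $u_\epsilon=(\epsilon/(p\lambda_1))^{(q-1)/q}$ gives the opposite inequality: since $(q-1)/q<0$ for $0<q<1$, the bound (\ref{bound}) becomes $u_\epsilon > \|X_i\|_r/(q\lambda_1)$, i.e.\ $q\lambda_1 u_\epsilon > \|X_i\|_r$, a contradiction. This careful handling of the reversed inequality due to the negative exponent is the main technical subtlety of the argument.

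Once this dichotomy is resolved, the upper branch of (\ref{cs}) applies at every $i\in T$, so the componentwise stationarity condition becomes
\begin{equation*}
0 \in (\partial \|X\beta^*-y\|_r)_i + 2\lambda_2 \beta_i^* + q\lambda_1 |\beta_i^*|^{q-1}\operatorname{Sign}(\beta_i^*),
\end{equation*}
which is precisely relation (\ref{foo}) in the proof of Theorem \ref{pro1}. Multiplying by $\beta_i^*$ produces the $q\lambda_1|\beta_i^*|^q$ term and reassembles (\ref{fot}) in vector form (with the indices $i\notin T$ again holding trivially), so $\beta^*$ is a generalized first-order stationary point of (\ref{lp}). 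The lower-bound statement (\ref{lower}) for its nonzero entries is then an immediate application of Theorem \ref{the1}, completing the proof.
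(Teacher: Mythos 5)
Your proposal is correct and follows essentially the same route as the paper's proof: rule out the lower branch of (\ref{cs}) at each $i\in T$ by combining the sign-multiplied stationarity condition, the bound (\ref{defp}), and the inequality $q\lambda_1 u_\epsilon>\|X_i\|_r$ obtained from (\ref{bound}) via the negative exponent $(q-1)/q$, then reassemble (\ref{fot}) and invoke Theorem \ref{the1}. The only cosmetic difference is that you state the stationarity of (\ref{model2}) with the $\Lambda^*$ factor and divide it out on $T$, whereas the paper works directly with the componentwise condition; the substance is identical.
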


\begin{proof}
	Using the fact that $\beta^*$ is a generalized first-order stationary point of (\ref{model2}), we get
	$0\in F_{\epsilon}(\beta^*).$ On the one hand, for any $ i\in T$, it gets that
	\begin{equation}\label{eq1}
		0\in \Big(\partial \|X\beta^*-y\|_r\Big)_i+2\lambda_2 \beta_i^*+\lambda_1\partial h_{u_{\epsilon}}\left(\beta^*_{i}\right),
	\end{equation}
which implies
$$
-2\lambda_2 \beta_i^*-\lambda_1 \partial h_{u_{\epsilon}}(\beta^*_{i})\in \Big(\partial \|X\beta^*-y\|_r\Big)_i.
$$
Multiplying $\operatorname{Sign}(\beta_i^*)$ on both sides, it gets that
$$
-2\lambda_2 |\beta_i^*|-\lambda_1  \operatorname{Sign}(\beta_i^*)\partial h_{u_{\epsilon}}\left(\beta^*_{i}\right)\in \operatorname{Sign}(\beta_i^*)\Big(\partial \|X\beta^*-y\|_r\Big)_i.
$$
Combining with (\ref{cs}) and noting the definition of $u_{\epsilon}$, we get
$$
-2\lambda_2 |\beta_i^*|-\lambda_1  |\partial h_{u_{\epsilon}}\left(\beta^*_{i}\right)|\in \operatorname{Sign}(\beta_i^*)\Big(\partial \|X\beta^*-y\|_r\Big)_i.
$$
From (\ref{defp}), it is not hard to deduce that
$$
2\lambda_2 |\beta_i^*|+\lambda_1  |\partial h_{u_{\epsilon}}\left(\beta^*_{i}\right)|\leq\|X_i\|_r.
$$
Notice that $2\lambda_2 |\beta_i^*|\geq0$, then we have
$$
\lambda_1  |\partial h_{u_{\epsilon}}\left(\beta^*_{i}\right)|\leq\|X_i\|_r,
$$
or equivalently,
\begin{equation}\label{bound2}
	|\partial h_{u_{\epsilon}}\left(\beta^*_{i}\right)|\leq\|X_i\|_r/\lambda_1.
\end{equation}
Next, we demonstrate that $|\beta_i^*|>u_{\epsilon}^{\frac{1}{q-1}}$ for all $i\in T$.
Assume that $0<|\beta_i^*|\leq u_{\epsilon}^{\frac{1}{q-1}}$ for some $i\in T$,  we get $|\partial h_{u_{\epsilon}}\left(\beta^*_{i}\right)|=qu_{\epsilon}$ from (\ref{cs}).
Recalling the definition of $u_{\epsilon}$ and noting the inequality (\ref{bound}), it is easy to know that
$$
|\partial h_{u_{\epsilon}}\left(\beta^*_{i}\right)|
=qu_{\epsilon}=
q\Big(\frac{\epsilon}{p\lambda_1}\Big)^{\frac{q-1}{q}}>\frac{\|X_i\|_r}{\lambda_1}.
$$
This conclusion contradicts (\ref{bound2}), leading to the result that $|\beta_i^*| > u_{\epsilon}^{\frac{1}{q-1}}$ for all $i \in T $.
From (\ref{cs}),  it yields that
$$
\partial h_{u_{\epsilon}}\left(\beta^*_{i}\right)=
q |\beta^*_i|^{q-1} \operatorname{Sign}\left(\beta^*_{i}\right).
$$
Therefore (\ref{eq1}) can be expressed as
$$
0\in \Big(\partial \|X\beta^*-y\|_r\Big)_i+2\lambda_2 \beta_i^*+q\lambda_1 |\beta^*_i|^{q-1} \operatorname{Sign}\left(\beta^*_{i}\right).
$$
Then, multiplying $\beta_i^*$ on both sides, it gets
$$
0\in \Big(\partial \|X\beta^*-y\|_r\Big)_i\beta_i^*+2\lambda_2 (\beta_i^*)^2+\lambda_1 q|\beta_i^*|^{q}.
$$
which means
$$
0\in \Lambda^*\Big(\partial \|X\beta^*-y\|_r+2\lambda_2 \beta^*\Big)+\lambda_1 q|\beta^*|^q.
$$
On the other hand, given that $\beta^*_i = 0$ for $i \notin T $, it is clear that the aforementioned relation also holds for $i \notin T $. Consequently, equation (\ref{fot}) is satisfied, which means $\beta^*$ is a generalized first-order stationary point of (\ref{lp}). Subsequently, the second part of this theorem is followed from Theorem \ref{the1}.
\end{proof}

The following corollary demonstrates that the minimizer of (\ref{model2}) is also a generalized first-order stationary point of (\ref{lp}), which directly follows from Theorems \ref{the2} and \ref{pro1}.
\begin{corollary}\label{cor11}
	Suppose that $\beta^*$ is an optimal solution of (\ref{lp}), then $\beta^*$ is also an optimal solution of (\ref{model2}) provided that the parameter $\epsilon>0$ satisfies \eqref{bound}.   Moreover, the nonzero entries of $\beta^*$ satisfy the lower bound given in (\ref{lower}).
\end{corollary}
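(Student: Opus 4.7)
The plan is to chain Theorem \ref{pro1} and Theorem \ref{the2}, with Theorem \ref{the1} supplying the quantitative tail, exactly as the one-line sketch in the paper suggests. First, since an optimal solution of (\ref{lp}) is in particular a local minimizer, Theorem \ref{pro1} immediately yields that $\beta^*$ satisfies (\ref{fot}) and is therefore a generalized first-order stationary point of (\ref{lp}). Theorem \ref{the1} then delivers the ``moreover'' clause of the corollary with no extra work: for every $i \in T$, $|\beta^*_i| > (q\lambda_1/\|X_i\|_r)^{1/(1-q)}$. This half of the conclusion is essentially free.

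The main task is to promote ``optimal for (\ref{lp})'' to ``optimal for (\ref{model2})''. The key algebraic fact is that (\ref{bound}) can be rearranged as $u_\epsilon^{1/(q-1)} = (\epsilon/(p\lambda_1))^{1/q} < (q\lambda_1/\|X_i\|_r)^{1/(1-q)}$, and combining this with the lower bound obtained above gives $|\beta^*_i| > u_\epsilon^{1/(q-1)}$ for every $i \in T$. In this regime the Clarke subdifferential (\ref{cs}) reduces to $\partial h_{u_\epsilon}(\beta^*_i) = q|\beta^*_i|^{q-1}\operatorname{Sign}(\beta^*_i)$, so term by term on the support the stationary inclusion for $F_{(\epsilon)}$ coincides with (\ref{fot}); off the support both inclusions reduce to the same condition at zero. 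Consequently $\beta^*$ is a generalized first-order stationary point of (\ref{model2}), and Theorem \ref{the2} closes the loop between the two stationary systems, certifying the correspondence of minimizers asserted in the corollary.

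The main obstacle is that (\ref{model2}) is nonconvex, so the step from ``generalized first-order stationary point of $F_{(\epsilon)}$'' to ``global optimum of $F_{(\epsilon)}$'' is not automatic. Concretely, there is a residual gap $F_{(\epsilon)}(\beta^*) - F(\beta^*) = \lambda_1 (1-q)u_\epsilon^{q/(q-1)}|T^c| = (1-q)\epsilon|T^c|/p$ arising from the strict positivity $h_{u_\epsilon}(0) > 0$ at the zero entries of $\beta^*$, and to conclude optimality one must upgrade $F_{(\epsilon)}(\beta) \ge F(\beta) \ge F(\beta^*)$ into $F_{(\epsilon)}(\beta) \ge F_{(\epsilon)}(\beta^*)$. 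I expect the threshold in (\ref{bound}) to be calibrated precisely so that this off-support residual is absorbable by the optimality certificate of $\beta^*$ in (\ref{lp}); verifying the absorption carefully, and thereby ruling out any competing candidate $\beta$ with denser support, is the delicate step and the one I would spend the most effort on.
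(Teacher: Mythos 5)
Your first half is correct and is all the paper itself actually uses: global optimality of $\beta^*$ for (\ref{lp}) gives local optimality, Theorem \ref{pro1} then gives the generalized stationarity (\ref{fot}), and Theorem \ref{the1} gives the lower bound (\ref{lower}). Your further observation that (\ref{bound}) forces $|\beta^*_i|>u_{\epsilon}^{1/(q-1)}$ for $i\in T$, so that $\partial h_{u_{\epsilon}}(\beta^*_i)=q|\beta^*_i|^{q-1}\operatorname{Sign}(\beta^*_i)$ on the support and $\beta^*$ is also a generalized first-order stationary point of (\ref{model2}), is also correct; it is precisely the converse direction of the computation carried out inside Theorem \ref{the2}.

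The genuine gap is the one you flag and then leave open: promoting ``stationary point of the nonconvex $F_{(\epsilon)}$'' to ``optimal solution of (\ref{model2})''. Your proposal does not complete this step, and the mechanism you hope for is not there: condition (\ref{bound}) is calibrated only so that $qu_{\epsilon}>\|X_i\|_r/\lambda_1$, i.e.\ so that stationary points cannot lie on the $|\beta_i|\le u_{\epsilon}^{1/(q-1)}$ branch of (\ref{cs}); it does nothing to shrink the off-support residual $F_{(\epsilon)}(\beta^*)-F(\beta^*)=\lambda_1\lvert\bar T\rvert\,h_{u_{\epsilon}}(0)=(1-q)\epsilon\lvert\bar T\rvert/p$, so the sandwich $F_{(\epsilon)}(\beta)\ge F(\beta)\ge F(\beta^*)$ never upgrades to $F_{(\epsilon)}(\beta)\ge F_{(\epsilon)}(\beta^*)$. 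You should be aware, though, that the paper does not close this gap either: its entire proof is the assertion that the corollary ``directly follows from Theorems \ref{the2} and \ref{pro1}'', and those theorems deliver only the stationarity and lower-bound conclusions; indeed the sentence introducing the corollary describes the weaker claim that a minimizer of (\ref{model2}) is a generalized first-order stationary point of (\ref{lp}), which is what the cited results actually support. So your diagnosis of where the difficulty sits is accurate, and the defensible conclusions here are stationarity of $\beta^*$ for both problems together with (\ref{lower}) --- not the optimality transfer as literally stated.
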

\section{ADMM  and PMM-SSN algorithms}\label{comp}
\setcounter{equation}{0}

This section focuses on developing algorithms to address the proposed model (\ref{lp}) through an iterative reweighted framework. Let $\beta^k\in\mathbb{R}^p$ be a given point, and let $w^k\in\mathbb{R}^p$ with its $i$-th entry $w_{i}^k:=\min\{u_\epsilon,|\beta^k_i|^{q-1}\}$ and $u_\epsilon=(\frac{\epsilon}{p\lambda_1 })^{\frac{q-1}{q}}$.

We now consider the following iterative reweighted minimization problem
\begin{equation}\label{ir}
	\min_{\beta\in \mathbb{R}^p}\Big\{F_{\beta^k}(\beta):=\|X\beta-y\|_r+\lambda_2\|\beta\|_2^2+q\lambda_1 \|W_k\beta\|_1 \Big\},
\end{equation}
where $W_{k}:=\operatorname{Diag}(w^k)$, i.e., a diagonal matrix with component $ w_i^k$ at its $i$-th position.
From an initial point $\beta^0$, the iterative reweighted framework  generates a sequence $\{\beta^k\}$ according to solving a series of the following reweighted minimization problem
\begin{equation}\label{iter1}
	\beta^{k+1}:= \arg\min_{\beta\in \mathbb{R}^p} F_{\beta^k}(\beta).
\end{equation}
It is evident that this is a convex minimization problem, because the nonconvex $\ell_q$-term with $0<q<1$ has been eliminated.

The following theorem asserts that any accumulation point of the sequence $\{\beta^k\}$ generated by (\ref{iter1}) is a generalized first-order stationary point of (\ref{lp}). The proof process of the following theorem can be viewed as an extension of \cite[Theorem 3.1]{lu}, particularly in the context of employing an $\ell_r$-norm loss function.

\begin{theorem}\label{the31}
	Suppose that $\{\beta^k\}$ be the sequence generated by (\ref{iter1}) and that $\bar{\beta}$ be an accumulation point of $\{\beta^k\}$. Assume that $\epsilon>0$ satisfies (\ref{bound}).  Then $\bar{\beta}$ is a generalized first-order stationary point of (\ref{lp}), i.e., (\ref{fot}) holds at $\bar{\beta}$. Moreover, the nonzero entries of $\bar{\beta}$ satisfy the lower bound (\ref{lower}).
\end{theorem}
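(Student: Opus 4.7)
The plan is to recognize the iterative reweighted scheme (\ref{iter1}) as a majorization--minimization algorithm for the $\epsilon$-approximation problem (\ref{model2}), pass to the limit in the first-order optimality conditions of (\ref{iter1}) to exhibit $\bar\beta$ as a generalized first-order stationary point of (\ref{model2}), and then invoke Theorem \ref{the2} under the standing assumption (\ref{bound}) to transfer both stationarity and the lower bound to (\ref{lp}).

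First I would establish the majorization property. By the definition of $h_{u_\epsilon}$ as an infimum over $s\in[0,u_\epsilon]$, the choice $s_i=w_i^k=\min\{u_\epsilon,|\beta_i^k|^{q-1}\}$ attains the minimum at $\beta=\beta^k$, so
\[
h_{u_\epsilon}(\beta_i) \;\le\; q\Bigl(|\beta_i|\,w_i^k - \tfrac{q-1}{q}(w_i^k)^{q/(q-1)}\Bigr), \qquad \forall\,\beta\in\mathbb{R}^p,
\]
with equality at $\beta=\beta^k$. Consequently $F_{\beta^k}(\beta)+c_k$ majorizes $F_{(\epsilon)}(\beta)$ for a constant $c_k$ depending only on $w^k$, giving the descent inequality $F_{(\epsilon)}(\beta^{k+1})\le F_{(\epsilon)}(\beta^k)$. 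Coercivity of $F_{(\epsilon)}$ (driven by the $\lambda_2\|\beta\|_2^2$ term) then forces $\{\beta^k\}$ to be bounded and $\{F_{(\epsilon)}(\beta^k)\}$ to converge to some value $F^\star$.

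Next I would extract a subsequence $\beta^{k_j}\to\bar\beta$, and by boundedness a further subsequence along which $\beta^{k_j+1}\to\hat\beta$. Using the convention $|0|^{q-1}=+\infty$ combined with the cap $u_\epsilon$, the map $\beta\mapsto w(\beta)$ is continuous, so $w^{k_j}\to\bar w$ where $\bar w_i=\min\{u_\epsilon,|\bar\beta_i|^{q-1}\}$. The first-order optimality condition at $\beta^{k_j+1}$,
\[
0 \;\in\; \partial\|X\beta^{k_j+1}-y\|_r + 2\lambda_2\beta^{k_j+1} + q\lambda_1 W_{k_j}\xi^{k_j+1},\qquad \xi^{k_j+1}\in\partial\|\beta^{k_j+1}\|_1,
\]
involves bounded multipliers $\xi^{k_j+1}\in[-1,1]^p$; passing to one more subsequence, $\xi^{k_j+1}\to\bar\xi$. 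Outer semicontinuity of convex subdifferentials then yields $\bar\xi\in\partial\|\hat\beta\|_1$ and $0\in\partial\|X\hat\beta-y\|_r + 2\lambda_2\hat\beta + q\lambda_1\bar W\bar\xi$, identifying $\hat\beta$ as the minimizer of the strongly convex majorizer $g_{\bar w}$. Chaining $F_{(\epsilon)}(\hat\beta)\le g_{\bar w}(\hat\beta)\le g_{\bar w}(\bar\beta)=F_{(\epsilon)}(\bar\beta)=F^\star=F_{(\epsilon)}(\hat\beta)$ forces equalities throughout, and strong convexity then delivers $\hat\beta=\bar\beta$.

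Finally, the limit relation $0\in\partial\|X\bar\beta-y\|_r+2\lambda_2\bar\beta+q\lambda_1\bar W\bar\xi$ must be matched with the Clarke subdifferential formula (\ref{cs}). Splitting coordinate-wise into the three cases $|\bar\beta_i|>u_\epsilon^{1/(q-1)}$, $0<|\bar\beta_i|\le u_\epsilon^{1/(q-1)}$, and $\bar\beta_i=0$, one verifies $q\bar w_i\bar\xi_i\in\partial h_{u_\epsilon}(\bar\beta_i)$ in each case, so $\bar\beta$ is a generalized first-order stationary point of (\ref{model2}). Theorem \ref{the2} then yields both that $\bar\beta$ is a generalized first-order stationary point of (\ref{lp}) and that the lower bound (\ref{lower}) holds. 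The main obstacle I anticipate is the identification $\hat\beta=\bar\beta$: it rests essentially on the strong convexity of $g_{\bar w}$ supplied by $\lambda_2>0$, and circumventing this in the degenerate case $\lambda_2=0$ would require separately establishing $\|\beta^{k+1}-\beta^k\|\to 0$ or a direct fixed-point argument to rule out jumping between distinct limit points of consecutive iterates.
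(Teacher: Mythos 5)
Your proposal is correct and follows the same overall skeleton as the paper: interpret (\ref{iter1}) as majorization--minimization for $F_{(\epsilon)}$ via the variational representation $h_{u_\epsilon}(\beta_i)=\min_{0\le s\le u_\epsilon}q(|\beta_i|s-\frac{q-1}{q}s^{q/(q-1)})$, obtain the descent inequality, show $\bar\beta$ minimizes the limiting majorizer $F(\cdot,\bar w)$, rewrite the optimality condition via (\ref{cs}), and invoke Theorem \ref{the2}. The one place you genuinely diverge is in how you prove that $\bar\beta$ minimizes the limiting majorizer. The paper passes to the limit in the \emph{inequality} $F(\beta,w^k)\ge F(\beta^{k+1},w^k)$ for fixed $\beta$, using only that the right-hand side is sandwiched between $F_{(\epsilon)}(\beta^{k+1})$ and $F_{(\epsilon)}(\beta^k)$, both of which converge to $F_{(\epsilon)}(\bar\beta)=F(\bar\beta,\bar w)$; this yields $F(\beta,\bar w)\ge F(\bar\beta,\bar w)$ for all $\beta$ without ever needing the shifted iterates $\beta^{k_j+1}$ to converge or their limit to be identified. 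You instead extract a further subsequence $\beta^{k_j+1}\to\hat\beta$, pass to the limit in the optimality conditions via outer semicontinuity of the subdifferentials, and then identify $\hat\beta=\bar\beta$ through strong convexity. Your route works, but the identification step imports an unnecessary reliance on $\lambda_2>0$ (which you rightly flag as a gap for $\lambda_2=0$, a case the model formulation nominally allows); note also that you do not actually need $\hat\beta=\bar\beta$ --- your own chain of equalities already gives $g_{\bar w}(\bar\beta)=\min g_{\bar w}$, so $\bar\beta$ is a minimizer of $g_{\bar w}$ regardless, and the first-order condition at $\bar\beta$ follows without strong convexity. The paper's function-value argument buys exactly this economy: it sidesteps both the extra subsequence extraction and the $\lambda_2>0$ dependence.
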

\begin{proof}
	At the beginning, we define
	$$
	F(\beta,w):=\|X\beta-y\|_r+\lambda_2\|\beta\|^2_2+q\lambda_1\sum_{i=1}^p\left(|\beta_i|w_i-\frac{q-1}{q}	w_i^{\frac{q}{q-1}} \right).
	$$
	It is a trivial task to see that $w^k$ is actually a minimizer of $F(\beta^k,w)$ with   constraint $w\in[0,u_\epsilon]$, that is,
	\begin{equation}\label{sk}
		w^k=\arg\min_{0\leq w\leq u_{\epsilon}}F(\beta^k,w).
	\end{equation}
	And that $\beta^{k+1}$ is one of a minimizer of $F(\beta,w^k)$, that is,
	$$
	\beta^{k+1}\in \arg\min_{\beta\in\mathbb{R}^p} F(\beta,w^k).
	$$
By comparing the expressions for $F_{(\epsilon)}(\beta)$ in (\ref{model2}) and $F(\beta,w)$, we can get from (\ref{sk}) that
$$
F_{(\epsilon)}(\beta)=\min_{0\leq \omega\leq u_{\epsilon}}F(\beta,\omega), \quad \text{and}\quad F_{(\epsilon)}(\beta^k)=F(\beta^k,\omega^k).
$$
Then we obtain that
\begin{equation}\label{FG}
	F_{(\epsilon)}(\beta^{k+1})=F(\beta^{k+1},\omega^{k+1})\leq F(\beta^{k+1},\omega^{k})\leq F(\beta^{k},\omega^{k})=F_{(\epsilon)}(\beta^{k}),
\end{equation}
which indicates that $\{F_{(\epsilon)}(\beta^{k})\}$ is a non-increasing sequence.
Let $\bar\beta$ be an accumulation point of the subsequence $\{\beta^k\}_\K$ such that $\{\beta^k\}_\K\rightarrow \bar\beta$. It follows that $F_{(\epsilon)}(\beta^{k})\rightarrow F_{(\epsilon)}(\bar\beta)$ due to the monotonicity and continuity of $F_{(\epsilon)}(\cdot)$.
Let $\bar w_i=\min\{u_{\epsilon},|\bar\beta_i|^{q-1}\}$ for all $i$.
We then observe that $\{w^k\}_\K\rightarrow \bar w$ and that $F_{(\epsilon)}(\bar\beta)=F(\bar\beta,\bar\omega)$.
Using (\ref{FG}) and that $F_{(\epsilon)}(\beta^{k})\rightarrow F_{(\epsilon)}(\bar\beta)$, we see that $F(\beta^{k+1},\omega^k)\rightarrow F_{(\epsilon)}(\bar\beta)=F(\bar\beta,\bar w)$. Besides, combining with (\ref{sk}), it holds that
$$
F(\beta,w^k)\geq F(\beta^{k+1},w^k), \quad \forall \ \beta\in \mathbb{R}^p.
$$
Let $k$ $(\in \K) \rightarrow \infty$, we obtain that
$$
F(\beta,\bar w)\geq F(\bar\beta,\bar w), \quad \forall \ \beta\in \mathbb{R}^p,
$$
which yields that
\begin{equation}\label{Gmin}
	\bar\beta\in \arg\min_{\beta\in \mathbb{R}^p}\left\{\|X\beta-y\|_r+\lambda_2\|\beta\|^2_2+\lambda_1q\sum_{i=1}^p|\beta_i|\bar w_i\right\}.
\end{equation}
From the first-order optimality condition of (\ref{Gmin}), we have
\begin{equation}\label{opt}
	0\in \Big(\partial \|X\bar\beta-y\|_r\Big)_i+2\lambda_2 \bar\beta_i+q\lambda_1\bar{w_i}\operatorname{Sign}(\bar\beta_i), \quad \forall i.
\end{equation}
It follows from the definition of $w_i^k$, we can express $\bar w_i^k$ as
$$
\bar w_{i}=\left\{\begin{array}{ll}
	\left|\bar\beta_{i} \right|^{q-1}, & \text { if }\left|\bar\beta_{i}\right|>u_{\epsilon}^{\frac{1}{q-1}}, \\
	u_{\epsilon}, & \text { if }\left|\bar\beta_{i}\right| \leq u_{\epsilon}^{\frac{1}{q-1}},
\end{array}\right.
$$
Then, it shows that (\ref{opt}) can be expressed as
$$
0\in \Big(\partial \|X\bar\beta-y\|_r\Big)_i+2\lambda_2 \bar\beta_i+\lambda_1\partial h_{u_{\epsilon}}(\bar\beta_i), \quad \forall \ i.
$$
In light of the above analysis, it follows that $\bar\beta$ is a generalized first-order stationary point of $F_{(\epsilon)}(\cdot)$ defined in \eqref{model2}. Using the above results and Theorem \ref{the2}, we conclude that $\bar\beta$ is a generalized first-order stationary point of (\ref{lp}). Thereafter, the nonzero entries of $\bar{\beta}$ satisfy the lower bound (\ref{lower}) from Theorem \ref{the1}.
\end{proof}

Observing that problem (\ref{ir}) is convex and nonsmooth, we  can develop two algorithms that effectively leverage the underlying its structures.

\subsection{ADMM for (\ref{ir})}
In this section, we aim to utilize a well-known first-order method called ADMM to solve (\ref{ir}).
For this purpose, we let $\eta:=X\beta-y$ and $\theta:=W_{k}\beta$, then (\ref{ir}) is equivalent to
\begin{equation}\label{admm0}
	\begin{array}{lll}
		\min\limits_{\beta,\eta,\theta} & \|\eta\|_r+\lambda_2\|\beta\|_2^2+\lambda_1q\|\theta\|_1\\[2mm]
		\text{s.t.} & X\beta-y-\eta=0,\\[2mm]
		&W_{k}\beta-\theta=0.
	\end{array}
\end{equation}
Let $\sigma>0$ be a penalty parameter, the augmented Lagrangian function associated with problem (\ref{admm0})  is given by
\begin{align*}
	\mathcal{L}_{\sigma}(\beta,\eta,\theta;u,v)&= \|\eta\|_r +\lambda_2\|\beta\|_2^2+\lambda_1q\|\theta\|_1+\langle u, X\beta-y-\eta\rangle+\frac{\sigma}{2}\|X\beta-y-\eta\|^2_2\\[2mm]
	&+\langle v, W_{k}\beta-\theta\rangle+\frac{\sigma}{2}\|W_{k}\beta-\theta\|^2_2,
\end{align*}
where $u\in\mathbb{R}^{n}$ and $v\in\mathbb{R}^{p}$ are  multipliers associated with the constraints.
When the traditional ADMM is employed, it miminizes the augmented Lagrangian function alternatively with respect to two non-overlapping blocks, such as
one block $\beta$ and the other block $(\eta,\theta)$. This leads to the following iterative scheme from an initial point $(\beta^{(0)},\eta^{(0)},\theta^{(0)})$:
$$
\left\{
\begin{array}{l}
	\beta^{(j)}:=\arg\min_{\beta}\mathcal{L}_{\sigma}(\beta,\eta^{(j)},\theta^{(j)};u^{(j)},v^{(j)}),
	\\[3mm]
	(\eta^{(j+1)}, \theta^{(j+1)}):=\arg\min_{\eta, \theta }\mathcal{L}_{\sigma}(\beta^{(j+1)},\eta,\theta;u^{(j)},v^{(j)}),\\[3mm]
	u^{(j+1)}:=u^{j}+\tau\sigma\big(X\beta^{(j+1)}-\eta^{(j+1)}-y\big),\\[3mm]
	v^{(j+1)}:=v^{j}+\tau\sigma\big(W_{k}\beta^{(j+1)}-\theta^{(j+1)}\big),
\end{array}
\right.
$$
where $\tau$ is steplength chosen within the interval $(0,(1+\sqrt{5})/2)$.
Clearly, the computational cost primarily arises from solving the subproblems related to the variables $\beta$ and $(\eta, \theta)$. We now demonstrate that each subproblem admits closed-form solutions by utilizing the proximal mappings of the $\ell_r$-norm for $r = 1$, $2$, and  $\infty$.

We   now concentrate on addressing both subproblems involved in this iterative scheme.
Firstly, with given $(\eta^{(j)},\theta^{(j)}$ and multipliers $(u^{(j)},v^{(j)})$, it is easy to deduce that solving the $\beta$-subproblem is actually equivalent to finding a solution of the following linear system:
$$
(2\lambda_2I_p+\sigma X^{\top}X+\sigma W_{k}^{\top}W_{k})\beta=
\sigma X^{\top}(y+\eta^{(j)}-\sigma^{-1}u^{(j)})+\sigma W_{k}^{\top}(\theta^{(j)}-\sigma^{-1}v^{(j)}),
$$
where $I_p$ is an identity matrix in $p$-dimensional real space.
Fortunately, solving this linear system is not particularly challenging because the well-known Sherman-Morrison-Woodbury formula can be used to obtain the inverse of its coefficient matrix.
Secondly, when we focus on the augmented Lagrangian function with variables $(\eta, \theta)$, we observe that both variables are independent. This implies that solving the $(\eta, \theta)$-subproblem together is the same as solving each one individually.
As a result, it is a trivial task to deduce that the $\eta$- and $\theta$- subproblems can be written as the following proximal mapping forms
$$
\eta^{(j+1)}=\operatorname{Prox}_{{\sigma}^{-1}\|\cdot\|_r}\Big(X\beta^{(j+1)}-y+u^{(j)}/\sigma\Big),
$$
and
$$
\theta^{(j+1)}=\operatorname{Prox}_{{\sigma}^{-1}\lambda_1q\|\cdot\|_1}\Big(W_{k}\beta^{(j+1)}+v^{(j)}/\sigma\Big).
$$
From subsection \ref{presult}, we know that the proximal mappings of the $\ell_r$-norm function for the case of $r = 1$, $2$, and  $\infty$ have analytical expressions.
This suggests that the iterative framework can be easily implemented.

Based on the above analysis, we outline the iterative steps of ADMM as it is applied to solve the convex subproblem (\ref{ir}).
\begin{framed}
\noindent
{\bf Algorithm 1: ADMM for (\ref{ir})}
\vskip 1.0mm \hrule \vskip 1mm
\noindent
\begin{itemize}[leftmargin=10mm]
\item[Input.] With given $\beta^{(0)}:=\beta^k$, choose an initial point $(\eta^{(0)},\theta^{(0)};u^{(0)},v^{(0)})$.
Choose positive constants $\sigma>0$ and $\tau\in(0,(1+\sqrt{5})/2)$. For $j=0,1,\ldots$, do the following operations iteratively:
\item[Step 1.] Given  $\eta^{(j)}$, $\theta^{(j)}$, and $u^{(j)}$, $v^{(j)}$.
Employ a numerical method to compute an  solution $\beta^{(j+1)}$ to the linear system
\begin{align*}
\Big(2\lambda_2I_p+\sigma X^{\top}X+\sigma W_{k}^{\top}W_{k}\Big)\beta
 =&
\sigma X^{\top}\Big(y+\eta^{(j)}-\sigma^{-1}u^{(j)}\Big)\\
+&\sigma W_{k}^{\top}\Big(\theta^{(j)}-\sigma^{-1}v^{(j)}\Big).
\end{align*}
\item[Step 2.] Given $\beta^{(j+1)}$ and $u^{(j)}$. Compute $\eta^{(j+1)}$ according to
$$
\eta^{(j+1)}:=\operatorname{Prox}_{{\sigma}^{-1}\|\cdot\|_r}\Big(X\beta^{(j+1)}-y+u^{(j)}/\sigma\Big),
$$
\item[Step 3.] Given $\beta^{(j+1)}$ and $v^{(j)}$. Compute $\theta^{(j+1)}$ according to
$$
\theta^{(j+1)}:=\operatorname{Prox}_{{\sigma}^{-1}q\lambda_1\|\cdot\|_1}\Big(W_{k}\beta^{(j+1)}+v^{(j)}/\sigma\Big).
$$
\item[Step 4.] Using $\beta^{(j+1)}$, $\eta^{(j+1)}$, $\theta^{(j+1)}$, update $u^{(j+1)}$ and $v^{(j+1)}$ according to
$$
u^{(j+1)}:=u^{(j)}+\tau\sigma\big(X\beta^{(j+1)}-\eta^{(j+1)}-y\big),
$$
and
$$v^{(j+1)}:=v^{(j)}+\tau\sigma\big(W_{k}\beta^{(j+1)}-\theta^{(j+1)}\big).$$
\end{itemize}
\end{framed}

The convergence results for the ADMM algorithm can be directly referenced from \cite{chen,MF}. Hence, we omit the convergence theorem here.
While ADMM is straightforward to implement, it is only a first-order algorithm, which may not yield higher precision solutions quickly. In the next section, we will leverage the specific structure of problem (\ref{ir}) and employ the highly efficient SSN method, utilizing the PMM approach through its dual.
\subsection{SSN based on PMM for (\ref{ir})}
The function $F_{\beta^k}(\beta)$ obtained from (\ref{ir}) is convex but nonsmooth. However, it is well-known that the SSN method requires a smooth function to achieve a faster local convergence rate. Therefore, we can instead employ the PMM framework and then take a dual approach, see also in \cite{l24}.

Let $\mu > 0$ be a given positive scalar that can also be determined dynamically. We consider (\ref{ir}) with an additional proximal point term:
\begin{equation}\label{irpmm}
	\beta^{k+1}:=\arg\min_{\beta\in \mathbb{R}^p}\Big\{\tilde{F}_{\beta^k}(\beta):=\|X\beta-y\|_r+\lambda_2\|\beta\|_2^2+q\lambda_1 \|W_{k}\beta\|_1+\frac{\mu}{2}\|X\beta-X{\beta}^k\|^2_2 \Big\},
\end{equation}
where $\beta^k\in\mathbb{R}^p$ is a current point.
It should be noted that the last term is actually a precondition used to derive a dual problem with favorable structures.

To derive its dual, it is convenient for us to express (\ref{irpmm}) as:
\begin{equation}\label{subp2}
	\begin{array}{ll}
		\min\limits_{\beta,\eta} & \|\eta\|_r+\lambda_2\|\beta\|_2^2+q\lambda_1\|W_{k}\beta\|_1
		+\frac{\mu}{2}\|\eta+y-X\beta^k\|^2_2\\[3mm]
		\text{s.t.} & X\beta-\eta=y.\\
	\end{array}
\end{equation}
The Lagrangian function associated with problem (\ref{subp2})  is given by
\begin{align*}
	\mathcal{L}(\beta,\eta;u)&= \|\eta\|_r+\lambda_2 \|\beta\|_2^2+q\lambda_1\|W_{k}\beta\|_1
	+\frac{\mu}{2}\|\eta+y-X\beta^k\|_2^2+\langle u, X\beta-y-\eta\rangle,
\end{align*}
where $u\in\mathbb{R}^m$ is a multiplier associated with the constraint.
The Lagrangian dual function $D(u)$ is defined as the minimum value of the Lagrangian function over $(\beta, \eta)$, that is
\begin{align*}
	D(u)
	=& \inf_{\beta,\eta} \mathcal{L}(\beta,\eta;u)\\[1mm]
	=& \min_{\beta} \Big\{\lambda_2 \|\beta\|_2^2+\langle u, X\beta \rangle+q\lambda_1\|W_{k}\beta\|_1\Big\}\\[1mm]
	+& \min_{\eta} \Big\{\|\eta\|_{r}+\frac{\mu}{2}\|\eta+y-X{\beta}^k\|_2^2-\langle u,\eta\rangle\Big\}-\langle u,y\rangle.
\end{align*}
Let $g_{k}(\beta) = q\lambda_1 \|W_{k} \beta\|_1 $. One can use the Moreau envelope function given in subsection \ref{presult} to express the minimize values of the $\beta$- and $\eta$-subproblems, that is,
\begin{align}
	\mathcal{X}(u):= & \min_{\beta} \Big\{\lambda_2 \|\beta\|_2^2+\langle u, X\beta \rangle+q\lambda_1\|W_{k}\beta\|_1\Big\}\nonumber\\[1mm]
	=&\Phi_{(2\lambda_2)^{-1}g_k(\cdot)}\Big(-{(2\lambda_2)}^{-1}X^{\top}u\Big)-\frac{1}{4\lambda_2}\|X^{\top}u\|^2_2,\label{betapro}
\end{align}
and
\begin{align*}
	\mathcal{Y}(u):=& \min_{\eta} \Big\{\|\eta\|_{r}+\frac{\mu}{2}\|\eta+y-X{\beta}^k\|_2^2-\langle u,\eta\rangle\Big\}\\[1mm]
	=&\Phi_{\mu^{-1} \| \cdot \|_{r}}\Big( \mu^{-1}u-y+X{\beta}^k \Big)-\frac{\mu}{2}\|\mu^{-1}u-y+X{\beta}^k\|^2_2.
\end{align*}
Here, the minimizers of $\beta$- and $\eta$-subproblems  arise from the expression for the proximal mapping:
$$
	\beta=\operatorname{Prox}_{(2\lambda_2)^{-1}g_k(\cdot)}\Big(-{(2\lambda_2)}^{-1}X^{\top}u\Big),
$$
and
$$
\eta=\operatorname{Prox}_{\mu^{-1} \| \cdot \|_{r}}\Big( \mu^{-1}u-y+X{\beta}^k \Big),
$$
provided that $u$ is given.

The Lagrangian dual problem of (\ref{subp2}) involves maximizing the dual function $D(u)$, which can be equivalently expressed as the following optimization problem:
\begin{equation}\label{duall}
\min_{u} \Big\{\Theta(u):=\langle u,y\rangle-\mathcal{X}(u)-\mathcal{Y}(u)\Big\}.
\end{equation}
It is established in \cite{m1,m2} that $\Theta(u)$ is convex and first-order continuously differentiable with its gradient given by:
$$
\nabla\Theta(u)=-X\text{Prox}_{(2\lambda_2)^{-1}g_k(\cdot)}\big(-{(2\lambda_2)}^{-1}X^{\top}u\big)+\text{Prox}_{\mu^{-1} \| \cdot \|_{r}}\big(\mu^{-1}u-y+X\beta^k \big)+y.
$$
As a result, the minimizer of $\Theta(u)$ can be obtained by solving the following first-order optimality condition:
$$
\nabla\Theta(u)=0.
$$
Additionally, it is important to note that the proximal mappings of the $\ell_r$-norm for $r=1$, $2$, and $\infty$ are strongly semismooth. Consequently, the above equations are also strongly semismooth, allowing for the application of the well-known semismooth Newton method.

It is noteworthy that the proximal operators $\text{Prox}_{\| \cdot \|_r}(\cdot)$ with $r=1$, $2$ and $\infty$ is nonsmooth, which means that the Hessian of $\Theta(u)$ is unavailable.
Fortunately, due to the strong semismoothness of the proximal mappings embedded in $\nabla\Theta(u)$,  the generalized Hessian of $\Theta(u)$  can be obtained explicitly from \cite{l33,smooth}.
Noting that the proximal mapping operators with $r=1$, $2$ and $\infty$ are Lipschitz continuous, then the following multifunction is well defined:
\begin{align*}
\tilde{\partial}^2\Theta(u):=&(2\lambda_2)^{-1}X\partial\Big(\text{Prox}_{(2\lambda_2)^{-1}g_{k}(\cdot)}\big(-{(2\lambda_2)}^{-1}X^{\top}u\big)X^{\top}\Big)\\
+&\mu^{-1} \partial\Big(\text{Prox}_{\mu^{-1} \| \cdot \|_{r}}\big( \mu^{-1}u-y+X\beta^k \big)\Big).
\end{align*}
Choose $U\in \partial\text{Prox}_{(2\lambda_2)^{-1}g_{k}(\cdot)}\big(-{(2\lambda_2)}^{-1}X^{\top}u\big)$ and $V\in \partial\text{Prox}_{\mu^{-1} \| \cdot \|_{r}}\big( \mu^{-1}u-y+X{\beta}^k \big)$. And let
$$
H:=(2\lambda_2)^{-1}XUX^{\top}+\mu^{-1} V.
$$
It is clear that $H$ is a positive semi-definite matrix, which implies that the SSN method still cannot be used. To overcome this issue, we modify $ H $ by adding a term:
$$
\tilde{H}:=H+\nu I_{n},
$$
where $\nu$ is a very small positive constant.

Based on the analysis above, we are ready to outline the iterative framework for the SSN method within the framework of PMM approach. The iterative steps are described as follows:

\begin{framed}
\noindent
{\bf Algorithm 2: PMM based on SNN for \eqref{irpmm} (PMM-SSN)}
\vskip 1.0mm \hrule \vskip 1mm
\begin{itemize}[leftmargin=10mm]
\item[Step 0.]  At current point $\beta^{k} \in \mathbb{R}^{p}$ choose an initial point $u^{(0)}\in\mathbb{R}^n$.
Input  $\mu>0$, $\varrho\in (0,1/2)$ and $\delta\in(0,1)$.
\item[Step 1.]  Applying SSN to find an approximate solution $\bar u^{k+1}$ such that
    $$
   \bar u^{k+1}:\approx\arg\min_{u}\Theta(u)
    $$
    via the following steps:
    \begin{itemize}
    \item [Step 1.1.]  Select $U^{(j)}\in \partial\text{Prox}_{(2\lambda_2)^{-1}g_k(\cdot)}\big(-{(2\lambda_2)}^{-1}X^{\top}u^{(j)}\big)$ and $V^{(j)}\in \partial\text{Prox}_{\mu{-1} \| \cdot \|_{r}}\big( \mu^{-1}u^{(j)}-y+X{\beta}^k \big)$, and set
    $$
    \tilde{H}^{(j)}:=(2\lambda_2)^{-1}XU^{(j)}X^{\top}+\mu^{-1} V^{(j)}+\nu I_{n}.
    $$
    Employ a numerical method (e.g., conjugate gradient  method) to compute an approximate solution $\Delta u^j$ to the linear system
\[
  \tilde{H}^{(j)}\Delta u^{(j)}+\nabla \Theta (u^{(j)})=0.
\]
\item[Step 1.2.] Find $\alpha_j:=\delta^{t_j}$, where $t_j$ is the first nonnegative integer $t$ such that
\[
\Theta (u^{(j)}+\delta^{t}\Delta u^{(j)})\leq \Theta (u^{(j)})+\varrho \delta^{t}\langle \nabla \Theta (u^{(j)}), \Delta u^{(j)}\rangle.
\]
\item[Step 1.3.] Set $u^{(j+1)}:=u^{(j)}+\alpha_j\Delta u^{(j)}$. Let $j=j+1$, go to Step 1.1.
    \end{itemize}
\item[Step 2.]  Compute
$$
\beta^{k+1}:=\text{Prox}_{(2\lambda_2)^{-1}g_k(\cdot)}\Big( -{2\lambda_2}^{-1}X^{\top}\bar u^{(k+1)}\Big).
$$
\end{itemize}
\end{framed}

We will now establish some relationships between the problems (\ref{irpmm}) and (\ref{ir}) under certain conditions. First, we demonstrate that $\tilde{F}_{\beta^k}(\beta)$ converges to $F_{\beta^k}(\beta)$ when the positive number $\mu$ is sufficiently small.

\begin{theorem}\label{pro3}
	The optimal objective value of problem (\ref{irpmm}) converges to that of  (\ref{ir})  if $\mu \downarrow 0$, that is,
	$$
	\lim_{\mu\downarrow 0}\min_{\beta\in \mathbb{R}^p}\tilde{F}_{\beta^k}(\beta)=\min_{\beta\in \mathbb{R}^p} F_{\beta^k}(\beta).
	$$
\end{theorem}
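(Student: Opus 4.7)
The plan is a straightforward two-sided sandwich argument, exploiting the fact that $\tilde{F}_{\beta^k}$ differs from $F_{\beta^k}$ only by a nonnegative proximal-like term $\frac{\mu}{2}\|X\beta - X\beta^k\|_2^2$ that vanishes as $\mu \downarrow 0$. The idea is to bound $\min_\beta \tilde{F}_{\beta^k}(\beta)$ from below by $\min_\beta F_{\beta^k}(\beta)$ trivially, and from above by evaluating $\tilde{F}_{\beta^k}$ at a minimizer of $F_{\beta^k}$.

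First, I would observe that for every $\beta \in \mathbb{R}^p$,
\begin{equation*}
\tilde{F}_{\beta^k}(\beta) = F_{\beta^k}(\beta) + \frac{\mu}{2}\|X\beta - X\beta^k\|_2^2 \geq F_{\beta^k}(\beta),
\end{equation*}
so taking the infimum over $\beta$ yields $\min_\beta \tilde{F}_{\beta^k}(\beta) \geq \min_\beta F_{\beta^k}(\beta)$. Next, since $\lambda_2 > 0$, the function $F_{\beta^k}$ is strongly convex (because of the $\lambda_2\|\beta\|_2^2$ term), continuous, and coercive, so it admits a unique minimizer $\beta^\star \in \mathbb{R}^p$. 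Evaluating $\tilde{F}_{\beta^k}$ at this $\beta^\star$ gives
\begin{equation*}
\min_\beta \tilde{F}_{\beta^k}(\beta) \leq \tilde{F}_{\beta^k}(\beta^\star) = F_{\beta^k}(\beta^\star) + \frac{\mu}{2}\|X\beta^\star - X\beta^k\|_2^2 = \min_\beta F_{\beta^k}(\beta) + \frac{\mu}{2}\|X\beta^\star - X\beta^k\|_2^2.
\end{equation*}

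Combining the two bounds, I would obtain
\begin{equation*}
0 \leq \min_\beta \tilde{F}_{\beta^k}(\beta) - \min_\beta F_{\beta^k}(\beta) \leq \frac{\mu}{2}\|X\beta^\star - X\beta^k\|_2^2.
\end{equation*}
Since $\beta^\star$ does not depend on $\mu$, the right-hand side tends to zero as $\mu \downarrow 0$, and the claimed limit follows.

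Honestly, there is no significant obstacle in this argument; the only technical point is ensuring existence of a minimizer of $F_{\beta^k}$ in order to make the upper bound step meaningful, which is automatic under the standing assumption $\lambda_2 > 0$ that underlies the elastic-net formulation \eqref{lp}. If one wished to avoid this, one could instead take an $\varepsilon$-minimizing sequence $\{\beta^{(\varepsilon)}\}$ of $F_{\beta^k}$, apply the same chain of inequalities, and let $\varepsilon \downarrow 0$ after $\mu \downarrow 0$; the conclusion is unchanged.
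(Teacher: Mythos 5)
Your proof is correct and follows essentially the same two-sided sandwich argument as the paper: the trivial lower bound from dropping the nonnegative proximal term, plus an upper bound obtained by evaluating $\tilde{F}_{\beta^k}$ at a (near-)minimizer of $F_{\beta^k}$ and letting $\mu \downarrow 0$. The only cosmetic difference is that you plug in the minimizer $\beta^\star$ directly (which also yields an explicit $O(\mu)$ rate), whereas the paper bounds $\min_\beta \tilde{F}_{\beta^k}$ by $\tilde{F}_{\beta^k}(\beta)$ for arbitrary $\beta$ and infimizes after taking the limit.
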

\begin{proof}
	It holds that
	$$
	\min_{\beta\in \mathbb{R}^p}\tilde{F}_{\beta^k}(\beta)\geq\min_{\beta\in \mathbb{R}^p} F_{\beta^k}(\beta).
	$$
	For any $\mu>0$ and $\beta\in\mathbb{R}^p$, we have that
	$$\min_{\beta\in \mathbb{R}^p}\tilde{F}_{\beta^k}(\beta)\leq \|X\beta-y\|_r+\lambda_2\|\beta\|_2^2+q\lambda_1 \|W_{k}\beta\|_1
	+\frac{\mu}{2}\|X\beta-X\beta^k\|^2_2.
	$$
	Taking limits on both hand-sides of this inequality as $\mu\downarrow 0$, it gets that
	$$
	\lim_{\mu\downarrow 0}\min_{\beta\in \mathbb{R}^p}\tilde{F}_{\beta^k}(x)\leq \|X\beta-y\|_r+\lambda_2\|\beta\|_2^2+q\lambda_1 \|W_{k}\beta\|_1,
	$$
	which means $$\lim_{{\mu\downarrow 0}}\min_{\beta\in \mathbb{R}^p}\tilde{F}_{\beta^k}(\beta)\leq\min_{\beta\in \mathbb{R}^p} F_{\beta^k}(\beta),$$ that is
	$$
	\lim_{\mu\downarrow 0}\min_{\beta\in \mathbb{R}^p}\tilde{F}_{\beta^k}(\beta)=\min_{\beta\in \mathbb{R}^p} F_{\beta^k}(\beta),
	$$
	which shows the conclusion is true.
\end{proof}

Next, we  show that the optimal solution $\bar\beta$ of $\tilde F_{\beta^k}(\beta)$ is actually the minimizer of $F_{\beta^k}(\beta)$ in the case of $\beta^k\equiv \bar\beta$.
\begin{proposition}\label{pro4}
	The vector $\bar\beta$ is an optimal solution of \eqref{ir} if and only if there exists a $\mu \geq 0$ such that
	$$
	\bar\beta \in \arg\min_{\beta\in\mathbb{R}^p} \tilde{F}_{\bar\beta}(\beta).
	$$
	This means that the optimal solution to the problem $\min_{\beta\in\mathbb{R}^p} \tilde{F}_{\bar\beta}(\beta)$ is indeed the same as the optimal solution to $\min_{\beta\in\mathbb{R}^p} F_{\bar\beta}(\beta)$. Consequently, the nonzero entries of $\bar\beta$ satisfy the lower bound (\ref{lower}) when $\epsilon>0$ satisfies (\ref{bound}).
\end{proposition}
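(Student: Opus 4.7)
The plan is to prove each direction of the equivalence separately, leveraging the key observation that the quadratic proximal term $\tfrac{\mu}{2}\|X\beta - X\bar\beta\|_2^2$ appearing in $\tilde F_{\bar\beta}$ both vanishes and has zero gradient exactly at $\beta = \bar\beta$. This makes the difference between $\tilde F_{\bar\beta}$ and $F_{\bar\beta}$ harmless when evaluated at $\bar\beta$, both at the level of function values and at the level of (sub)gradients.

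For the forward implication, suppose $\bar\beta \in \arg\min_{\beta\in\mathbb{R}^p} F_{\bar\beta}(\beta)$. I would simply observe that for any $\mu \geq 0$ and every $\beta \in \mathbb{R}^p$,
\[
\tilde F_{\bar\beta}(\beta) \;=\; F_{\bar\beta}(\beta) + \frac{\mu}{2}\|X\beta - X\bar\beta\|_2^2 \;\geq\; F_{\bar\beta}(\beta) \;\geq\; F_{\bar\beta}(\bar\beta) \;=\; \tilde F_{\bar\beta}(\bar\beta),
\]
so $\bar\beta$ is also a minimizer of $\tilde F_{\bar\beta}$; in particular such a $\mu \geq 0$ exists. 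For the reverse implication, suppose $\bar\beta$ minimizes $\tilde F_{\bar\beta}$ for some $\mu \geq 0$. Convexity of $\tilde F_{\bar\beta}$ yields $0 \in \partial \tilde F_{\bar\beta}(\bar\beta)$. Applying the convex subdifferential sum rule (no constraint qualification is needed since the quadratic term is everywhere smooth) together with $\nabla\bigl[\tfrac{\mu}{2}\|X\beta - X\bar\beta\|_2^2\bigr]\big|_{\beta=\bar\beta} = \mu X^{\top}(X\bar\beta - X\bar\beta) = 0$, I would conclude $0 \in \partial F_{\bar\beta}(\bar\beta)$. Since $F_{\bar\beta}$ is itself convex, this stationarity is equivalent to global optimality, establishing $\bar\beta \in \arg\min_{\beta} F_{\bar\beta}(\beta)$.

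For the final assertion on the lower bound of nonzero entries, I would interpret $\bar\beta$ as a fixed point of the iterative reweighted scheme (\ref{iter1}): since $\bar\beta = \arg\min_{\beta} F_{\bar\beta}(\beta)$, initializing (\ref{iter1}) with $\beta^0 = \bar\beta$ produces the constant sequence $\beta^k \equiv \bar\beta$, whose unique accumulation point is $\bar\beta$ itself. Theorem \ref{the31} then guarantees, provided $\epsilon$ satisfies (\ref{bound}), that $\bar\beta$ is a generalized first-order stationary point of (\ref{lp}), and the lower bound (\ref{lower}) on the nonzero entries of $\bar\beta$ follows directly from Theorem \ref{the1}.

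There is essentially no deep technical obstacle here; the proof is short once one recognizes the vanishing-value and vanishing-gradient properties of the proximal term at $\bar\beta$. The only point deserving care is the legitimacy of the convex subdifferential sum rule in the reverse direction, which is unproblematic because the added proximal term is everywhere differentiable. The slight care in the last paragraph lies in recasting the self-consistent fixed-point equation $\bar\beta = \arg\min F_{\bar\beta}$ as an instance of the iterative scheme so that the accumulation-point hypothesis of Theorem \ref{the31} is fulfilled trivially.
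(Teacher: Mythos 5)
Your proof is correct and follows essentially the same route as the paper: both rest on the observation that the proximal term $\tfrac{\mu}{2}\|X\beta-X\bar\beta\|_2^2$ vanishes in value and in gradient at $\beta=\bar\beta$, so that $\partial F_{\bar\beta}(\bar\beta)=\partial\tilde F_{\bar\beta}(\bar\beta)$ and the two minimization problems share $\bar\beta$ as an optimizer, after which Theorems \ref{the31} and \ref{the1} give the lower bound. Your forward direction via the direct value inequality and your explicit fixed-point reading of the constant sequence are minor (and slightly more detailed) variants of the paper's purely subdifferential argument, not a genuinely different approach.
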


\begin{proof}
	Noting that $F_{\bar\beta}(\cdot)$ is locally Lipschitz continuous near $\bar\beta$ and convex at $\bar\beta $, we have that $ 0 \in \partial F_{\bar\beta}(\beta) $ is equivalent to $\bar\beta $ being an optimal solution of $ F_{\bar\beta}(\cdot) $.
		Additionally, since the function $\tilde{F}_{\bar\beta}(\cdot) $ is convex, it follows that $0 \in \partial \tilde{F}_{\bar\beta}(\beta) $ is equivalent to $ \bar\beta \in \arg\min_{\beta\in\mathbb{R}^p} {\tilde{F}_{\bar\beta}(\beta)} $.
		Combining these results with the fact that $\partial F_{\bar\beta}(\bar\beta) = \partial \tilde{F}_{\bar\beta}(\bar\beta) $, we conclude that the optimal solution $\bar\beta $ for $\min_{\beta\in \mathbb{R}^p} \tilde{F}_{\bar{\beta}}(\beta) $ is also the optimal solution for $ F_{\bar{\beta}}(\beta) $.
		Subsequently, applying Theorem \ref{the31} and \ref{the1}, we can directly obtain the remaining result of this theorem.
\end{proof}
\section{Numerical experiments}\label{num4}

In this section, we showcase several simulated and real-world examples to highlight the performance of the proposed algorithms in tackling non-convex and non-Lipschitz penalized high-dimensional linear regression problems.
All the experiments are performed with Microsoft Windows 10 and MATLAB R2021b, and run on a PC with an Intel Core i5 CPU at 2.70 GHz and 16 GB of memory.

\subsection{Experiments setting}\label{set41}

{\bf{(i) Data generations:}} In the simulation experiments, we generate the $n\times p$ matrix $X$ whose rows are drawn independently from $\mathcal{N}(0,\Sigma)$ with $\sum_{i-j}=\kappa^{|i-j|}$ and $1\leq i,j\leq p$, where $\kappa$  is the correlation coefficient of matrix $X$.
In order to generate the target regression coefficient $\underline\beta\in \mathbb{R}^{p}$, we randomly select a subset of $\{1,...,p\}$ to form the active set $\mathcal{A}^*$ with $|\mathcal{A}^*|=K<m$. Let $R=m_2/m_1$, where $m_2=\|\beta^*_{\mathcal{A}^*}\|_{\max}$ and $m_1=\|\beta^*_{\mathcal{A}^*}\|_{\min}=1$. Here $\|\cdot\|_{\max}$ and $\|\cdot\|_{\min}$ denote the maximum and minimum absolute values of a vector, respectively. Then the $K$ nonzero coefficients in $\beta^*$ are uniformly distributed in $[m_1,m_2]$. The response variable is derived by $y=X\beta^*+\alpha\varepsilon$, where $\alpha$ is a noise level, and $\varepsilon$ is a noise, e.g., log-normal noise, Gaussian noise, and uniform noise.\\
{\bf{(ii) Parameters settings:}}  The values of the parameters involved in model \eqref{lp} and the algorithms ADMM and PMM-SSN will be given in the following context.
Additionally, through extensive tests, we observed that the choice of the initial point has minimal impact on each algorithm. As a result, we set the initial point uniformly as $0$.\\
{\bf{(iii) Stopping criteria:}} We use the following  relative residual denoted as $\eta_1$ and $\eta_2$ to measure the accuracy the approximate optimal solution:
$$
\eta_1:=\frac{\|\beta_k-{\text{Prox}_{\lambda_1\|\cdot\|_1}}(\beta_k-X^{\top}(X\beta_k-y))\|_2}{1+\|\beta_k\|_2+\|X^{\top}(X\beta_k-y)\|_2}
$$
and
\[
\eta_2:=\frac{\|\beta^{k+1}-\beta^k\|_2}{\max\{\|\beta^k\|_2,1\}}.
\]
In addition, we stop the iterative process if  $\eta_2\leq 1e-4$ or the iteration number achieves $2000$.\\
{\bf{(iv) Evaluation indexes:}} To highlight the efficiency and accuracy of PMM-SSN and ADMM, we perform experiments comparing these two methods with each other and with other algorithms.
Specifically, we consider four evaluation measurements based on multiple independent experiments: the average CPU time (Time) in seconds, the standard deviation (SD), the average $\ell_2$-norm relative error, and the average mean squared error (MSE), defined as follows:
$$
\text{RE}:=\sum\frac{\|{\bar\beta}-\underline\beta\|_2}{\|\underline\beta\|_2}/N,\quad\text{and}\quad
	\text {MSE} :=\sum\frac{\|\bar{\beta}-\underline\beta\|_2}{p}/N,
$$
where $\bar{\beta}$ is the estimated regression coefficient by ADMM or PMM-SSN, and $N$ is the total times of tests.
Clearly, a smaller `Time' indicates faster calculation speed, while lower values of `SD', `RE', and `MSE' reflect higher solution quality.

\subsection{Test on simulated data}
In this part, we do a series of numerical experiments on fixed and random data to numerically evaluate the superiority of our propose algorithms.
Meanwhile, we also conduct performance comparisons among these algorithms to address the proposed problem (\ref{lp}).

\subsubsection{Test on a fixed data}
In this subsection, we analyze the numerical behavior of PMM-SSN and ADMM algorithms based
on $100$ independent tests.
In each test, we use the following parameters' values: $n=300$, $p=1000$, $K=10$, $R=100$, and $N=5$.

We investigate the performance of PMM-SSN and ADMM for variable selection and parameter estimation.
In this test, we generate the coefficient matrix $X$ with $\kappa=0.2$ and set the true regression parameter as $\underline\beta$, whose $10$ non-zero elements are $ \underline\beta_{30}=-5$, $
\underline\beta_{198}=2$, $\underline\beta_{269}=-3$, $\underline\beta_{395}=-2$, $\underline\beta_{442}=4$, $\underline\beta_{495}=-4$, $\underline\beta_{637}=1$, $\underline\beta_{776}=5$, $\underline\beta_{777}=-1$, and $\underline\beta_{865}=3$.
We set the noise level as $\alpha=1e-3$. For simplicity, we set $q=1$ here, which represents the elastic-net penalty.
To clearly see the numerical performance of PMM-SSN and ADMM algorithms, we show the box plots in Figure \ref{fig1}.
Additionally, we compute the overall average standard deviation (SD) to illustrate the recovery performance of both algorithms at zero input positions.
It is important to note that, to help readers assess the bias, the values are presented to four decimal places. Furthermore, to evaluate the estimation accuracy for a large number of zero elements, we also provide the average relative error (RE) values.

In this test, we choose $\sigma=1e-3$ and $\tau=1.618$ in ADMM, while, we set $\mu=1e-3$, $\rho=0.9$, $\varrho=0.1$, $\delta=0.5$ and $\nu=1e-6$ in PMM-SSN.
On the model parameters, we choose different values based on the various types of noise.
In the case of log-normal distribution noise (LN), we set $r=1$, i.e., the loss function is $\|Ax-b\|_1$.
In the case of Gaussian distribution noise (GN), we set $r=2$, i.e., the loss function is $\|Ax-b\|_2$.
The tuning parameters' values in both cases are chosen as $\lambda_1=0.02$ and $\lambda_2=5e-4$.
In the case of uniform distribution noise (UN), we set $r=\infty$, i.e., the loss function is $\|Ax-b\|_{\infty}$. Meanwhile, the weighting parameters are chosen as $\lambda_1=0.02$ and $\lambda_2=5e-6$.

Under these parameters' settings, for the LN case, the PMM-SSN yields a RE value of $2.69e-2$, while the ADMM yields a RE value of $3.2e-2$. The corresponding SD values are $8.9e-3$ for PMM-SSN and $8.8e-3$ for ADMM.
For the case of GN, the PMM-SSN achieves RE$=2.600e-2$ and SD$=8.900e-3$, while the ADMM yields RE$=3.190e-2$ and SD$=8.800e-3$.
For the case of UN, the RE values obtained by PMM-SSN and ADMM are $1.200e-3$  and $1.200e-3$, respectively, while the SD values are $4.8170e-4$ and $2.297e-4$.
The results show that the performance of PMM-SSN and ADMM are comparable.

\begin{figure}[htbp]
	\centering
	\begin{subfigure}[b]{0.32\textwidth}
		\includegraphics[width=\textwidth]{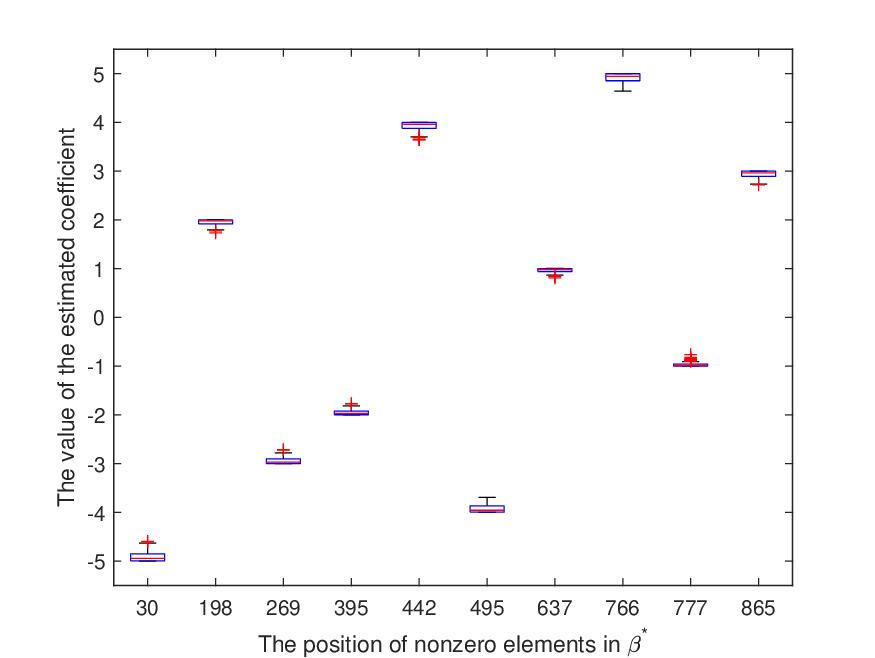}
		\caption{LN (PMM-SSN)}
	\end{subfigure}
	\begin{subfigure}[b]{0.32\textwidth}
		\includegraphics[width=\textwidth]{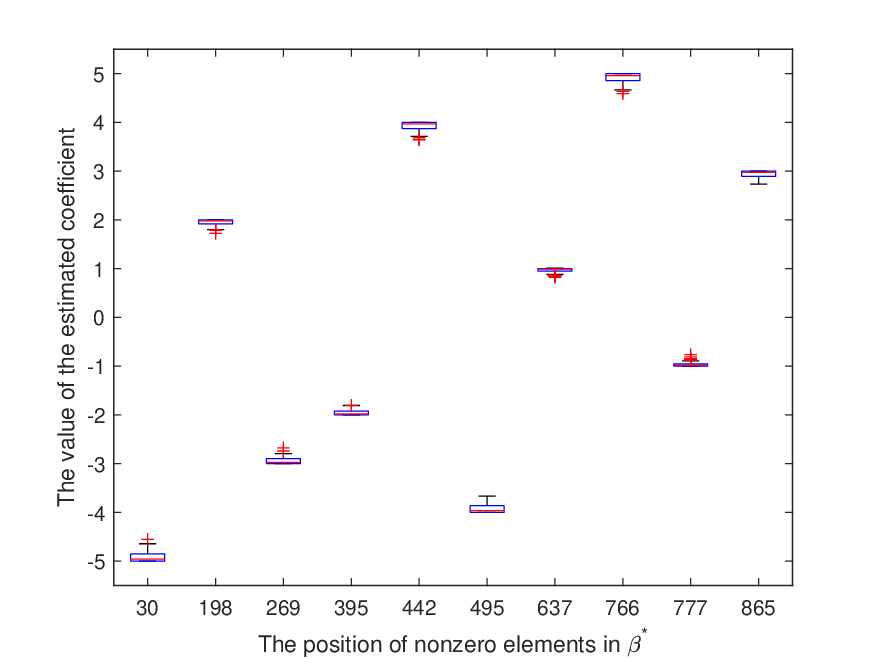}
		\caption{GN (PMM-SSN)}
	\end{subfigure}
	\begin{subfigure}[b]{0.32\textwidth}
		\includegraphics[width=\textwidth]{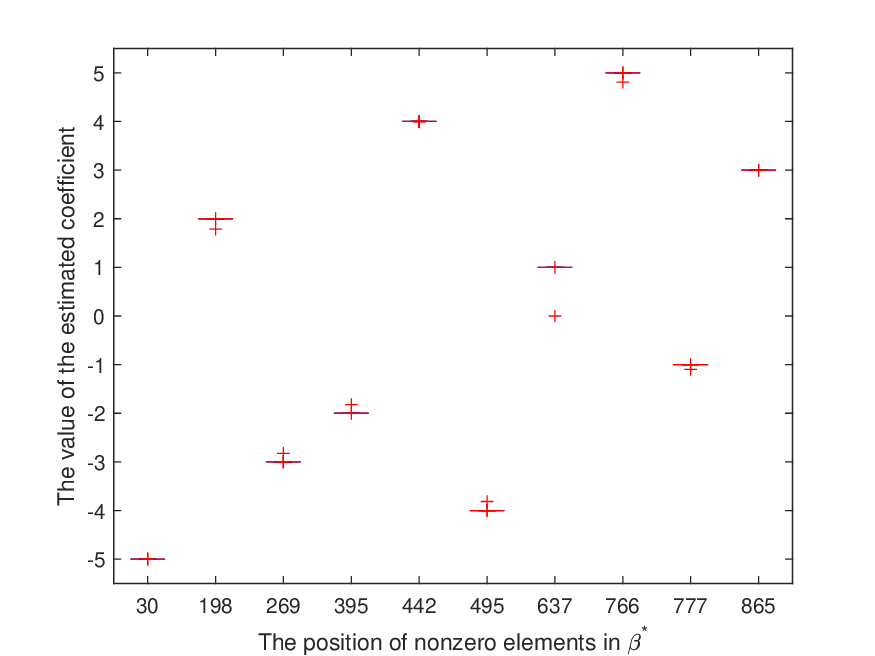}
		\caption{UN (PMM-SSN)}
	\end{subfigure}
	\begin{subfigure}[b]{0.32\textwidth}
		\includegraphics[width=\textwidth]{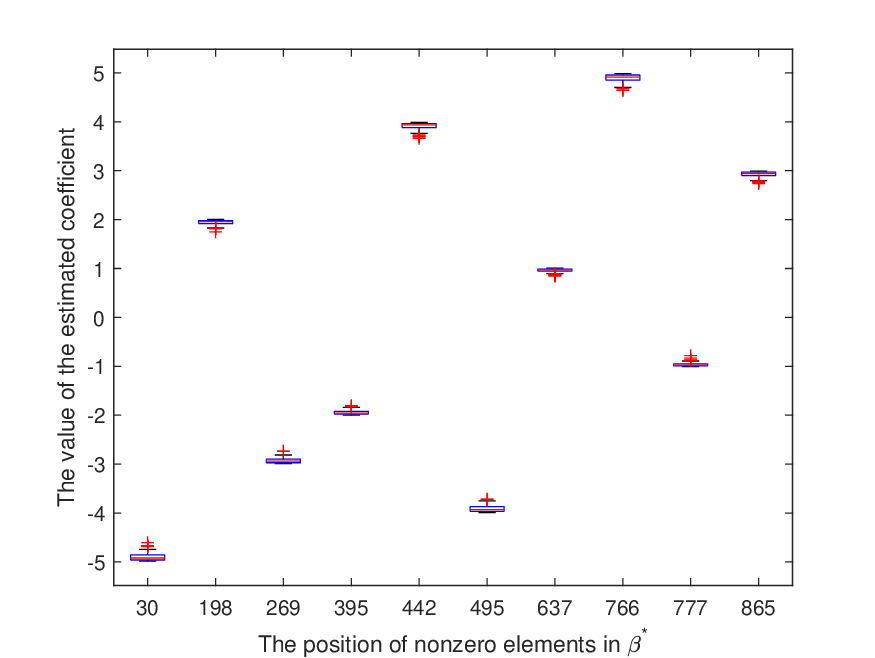}
		\caption{LN (ADMM)}
	\end{subfigure}
	\begin{subfigure}[b]{0.32\textwidth}
		\includegraphics[width=\textwidth]{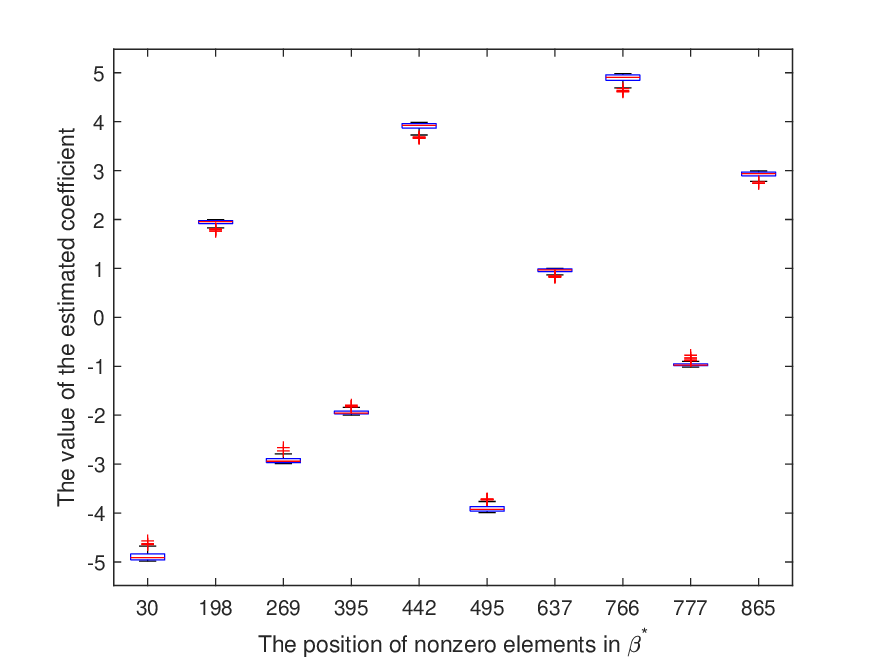}
		\caption{GN (ADMM)}
	\end{subfigure}
	\begin{subfigure}[b]{0.32\textwidth}
		\includegraphics[width=\textwidth]{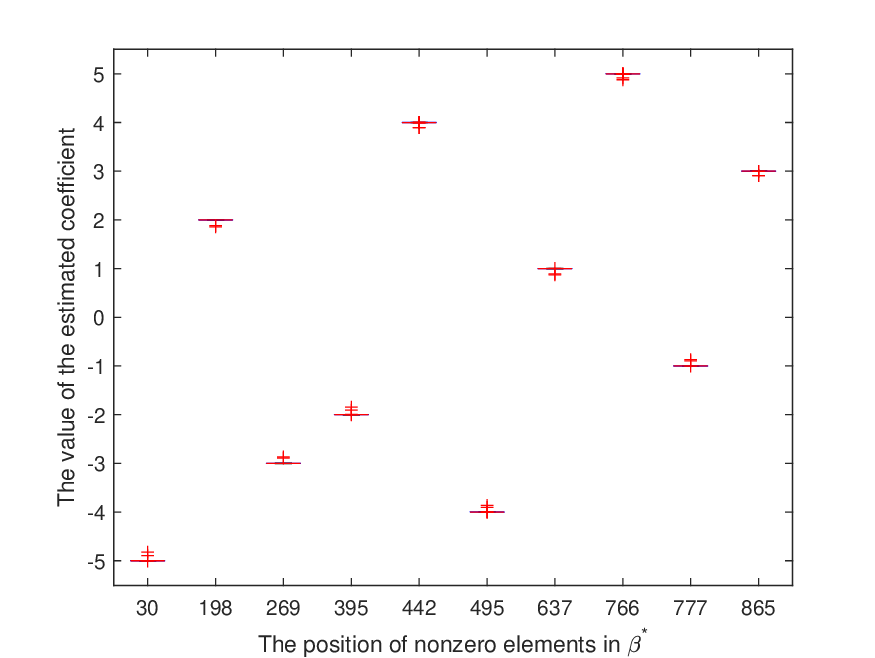}
		\caption{UN (ADMM)}
	\end{subfigure}
	\vspace{-.0cm}
	\caption{\small Distribution of predicted points at nonzero input locations}
	\label{fig1}
	\vspace{-.2cm}
\end{figure}

Figure \ref{fig1} illustrates the distribution of the regression data generated by PMM-SSN and ADMM at the non-zero input positions.
From this figure, we can observe that both algorithms exhibit stability in addressing linear regression problems across different noise types, as evidenced by the average value being closely aligned with the true non-zero elements.
This phenomenon shows that PMM-SSN and ADMM  can estimate non-zero elements with a high accuracy.

\subsubsection{Test with different parameter's values}

In this part, we numerically compare the  performance of PMM-SSN and ADMM from the perspective of the RE values using random simulated data.
Based on the different types of noise, we report the average number of iterations with different sparsity levels in Figure \ref{fig2}.
In this test, we use the approach described in Section \ref{set41} to generate the data except that the correlation coefficient $\kappa$ is set as $0.3$.
In this simulation, we set the $\ell_q$-norm with $q=[0.2,0.3,0.5,0.7,0.9]$ in model \eqref{lp}.
On the tuning parameters,  we chose $\lambda_1=0.2$, $\lambda_2=1e-4$, and$\mu=1e-2$ in the case of LN and GN noise.
But, in the case of UN noise, we choose $\lambda_1=0.03$ and $\lambda_2=1e-7$.
In addition, we also test both algorithms using different sparsity levels  $K=[5,\ldots,15]$.
The results are presented in Figure \ref{fig2}, with noise types LN, GN, and UN arranged from left to right.

\begin{figure}[htbp]
	\centering
	\begin{subfigure}[b]{0.32\textwidth}
		\includegraphics[width=\textwidth]{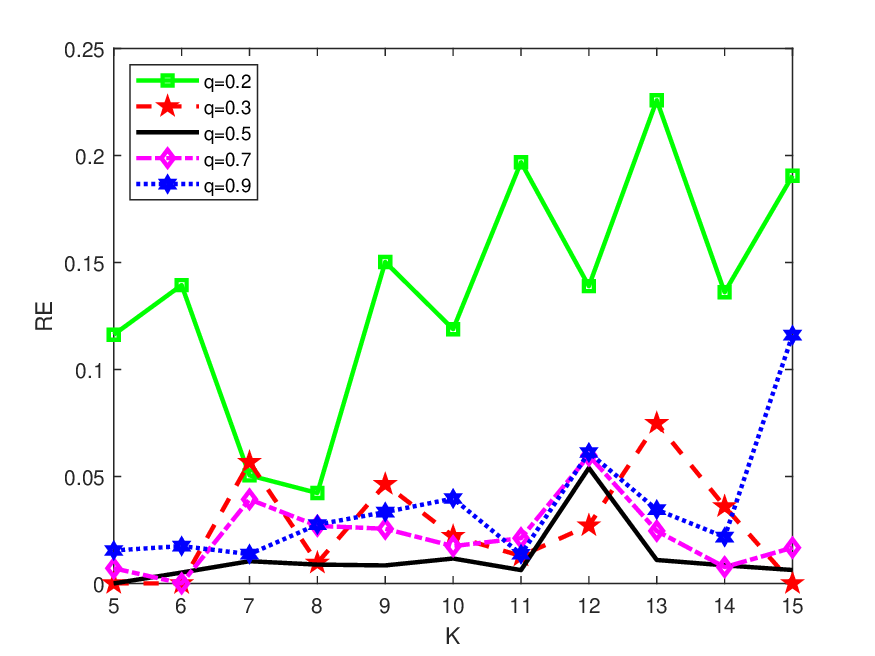}
		\caption{LN (SSN-PMM)}
	\end{subfigure}
	\begin{subfigure}[b]{0.32\textwidth}
		\includegraphics[width=\textwidth]{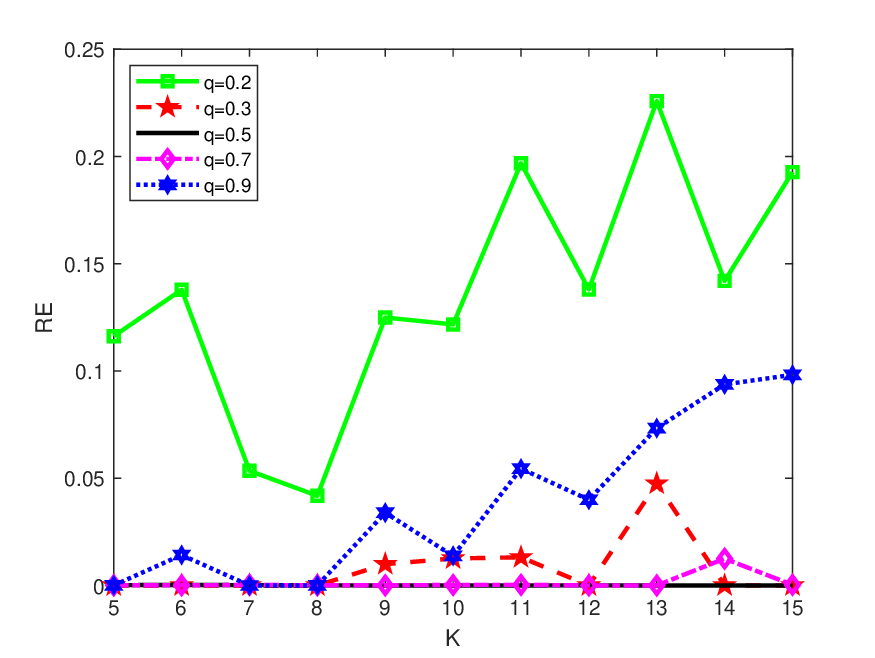}
		\caption{GN (SSN-PMM)}
	\end{subfigure}
	\begin{subfigure}[b]{0.32\textwidth}
		\includegraphics[width=\textwidth]{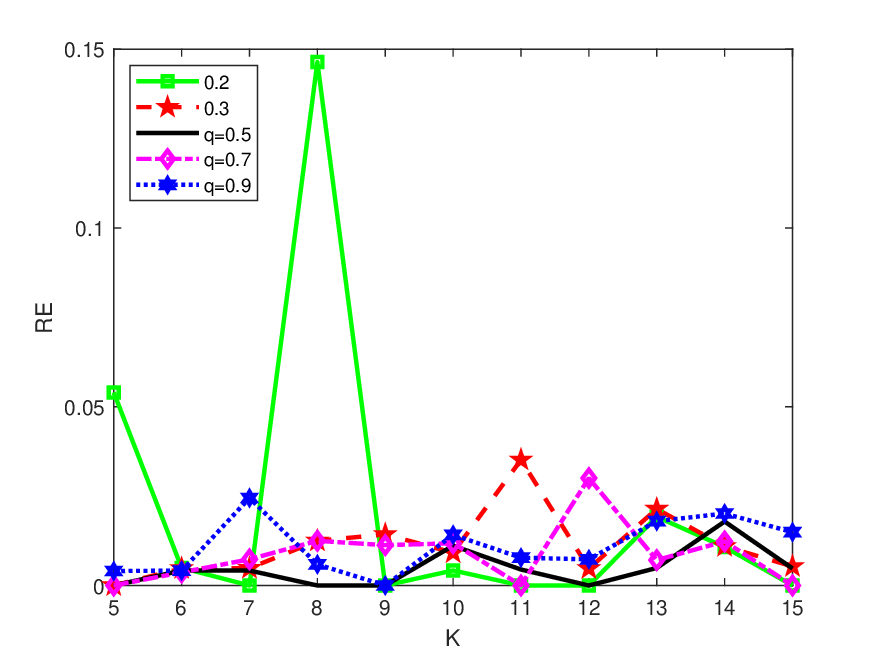}
		\caption{UN (SSN-PMM)}
	\end{subfigure}
	\begin{subfigure}[b]{0.32\textwidth}
		\includegraphics[width=\textwidth]{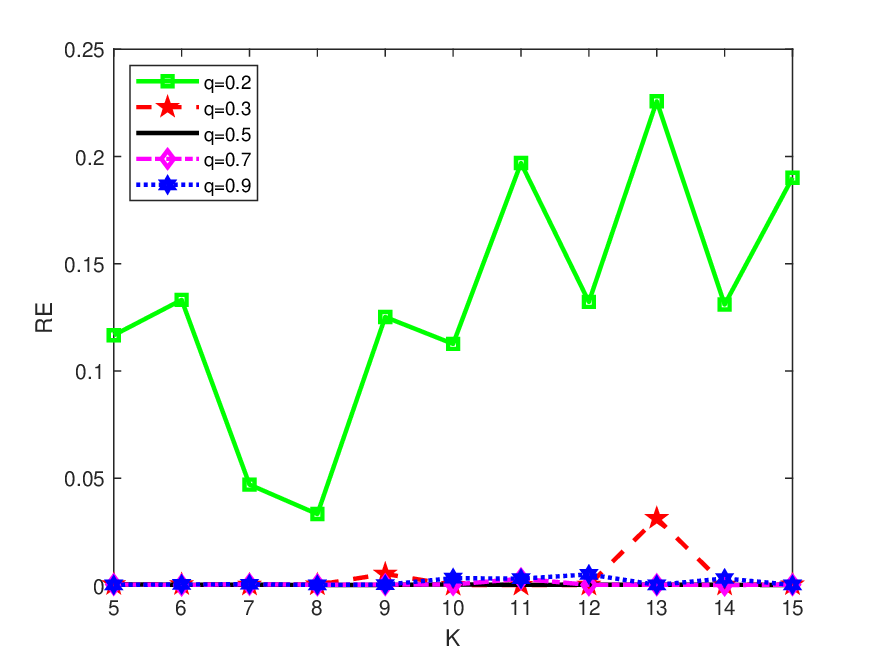}
		\caption{LN (ADMM)}
	\end{subfigure}
	\begin{subfigure}[b]{0.32\textwidth}
		\includegraphics[width=\textwidth]{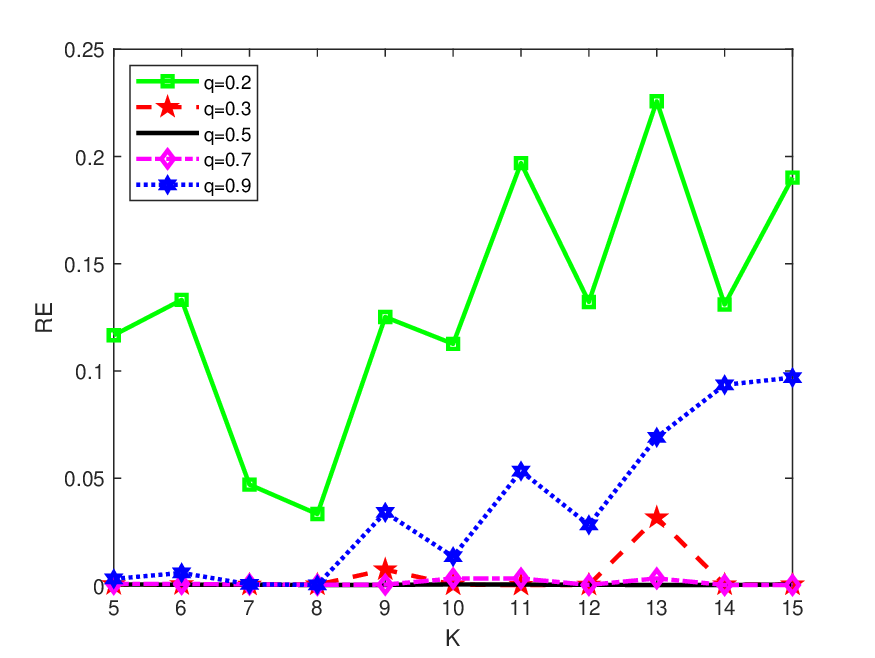}
		\caption{GN (ADMM)}
	\end{subfigure}
	\begin{subfigure}[b]{0.32\textwidth}
		\includegraphics[width=\textwidth]{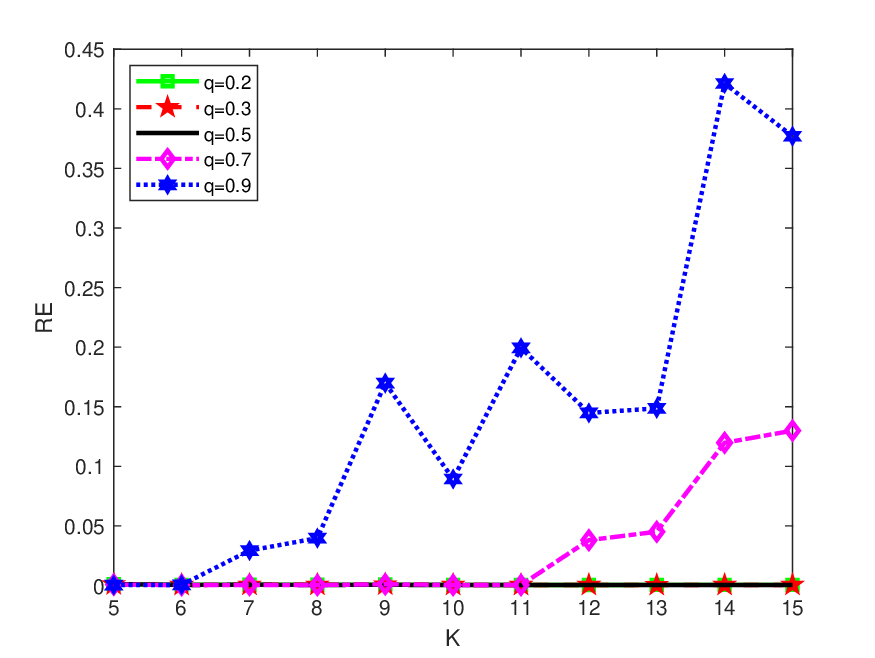}
		\caption{UN (ADMM)}
	\end{subfigure}
	\vspace{-.0cm}
	\caption{\small RE values under different $q$ values and sparsity levels}
	\label{fig2}
	\vspace{-.2cm}
\end{figure}

We focus on the first row of Figure \ref{fig2} which represents the effectiveness of PMM-SSN in terms of relative error (RE) according to different types of noise.
As shown in each plot (a) and (b) that, except for the case of $q=0.2$, the relative error lines remain relatively stable, which indicates that the results are consistent across different sparsity levels.
Additionally, it is evident that the error is significantly larger when $q = 0.2$, while it is much smaller in the other cases.
However, as seen in plot (c), in the UN case, the relative error is significantly small when the sparsity level exceeds $9$,
this suggests that the $\ell_q$-norm in the generalized elastic-net penalty has minimal impact on regression accuracy when the coefficients are sufficiently sparse.
Now we observe the second row of Figure \ref{fig2} to evaluate the performance of ADMM in the sense of the value of RE.
In the cases of LN and GN noise, it is evident that the penalty with $q = 0.2$, the ADMM performs poorly, whereas $q = 0.5$ and $0.7$, ADMM delivers the best performance.
However, in the case of UN noise, ADMM performs best at each sparsity level when $q = 0.3$, but its fails with $q = 0.2$.
Finally, we see from each plot that, the RE remains relatively stable when $q=0.5$ at all sparsity levels.
Therefore, in subsequent experimental tests, we set $q=0.5$ for the proposed generalized elastic-net penalty.

To demonstrate the numerical advantages of ADMM and PMM-SSN, we conducted a series of experiments using different tuning parameter values.
In this test, we set the sparsity level to $K = 10$ and the penalty parameter to $q = 0.5$, as these settings have been shown to be effective.
For simplicity in this test, we fix $\lambda_2$ and gradually vary the value of $\lambda_1$ for the $\ell_q$-norm in the generalized elastic-net penalty of \eqref{lp}.
For  $\lambda_2$, we set $\lambda_2=1e-4$ for both LN and GN noise, while for UN noise, we set $\lambda_2=1e-6$.
In the test, we run PMM-SSN and ADMM $50$ times and record the average values of MSE values (`MSE'), the RE values (`RE') and the numbers of iterations (`Iter')
in Table \ref{tab1}.
From this table we see that  both tested algorithms perform effectively in each case, producing regression solutions with higher accuracy.
Additionally, we observe that the values of RE and MSE obtained using PMM-SSN are generally lower than those from ADMM. This indicates that the second-order SSN employed in the PMM framework is beneficial for achieving higher accuracy solutions.
Focusing on the third column regarding the iterations, we see that PMM-SSN outperforms ADMM, achieving better results in fewer iterations. This further demonstrates that the PMM-SSN algorithm can produce more accurate solutions.

\begin{table}[htbp]
	\centering {\scriptsize\caption{Numerical results of  PMM-SSN and ADMM.}
		\begin{tabular}{cc|ccc|ccc}
			\hline
			\multicolumn{2}{c|}{} & \multicolumn{3}{c|}{PMM-SSN} & \multicolumn{3}{c}{ADMM}\cr
			\hline
			Noise& $\lambda_1$&MSE&RE&Iter&MSE&RE&Iter\cr
			\hline
			\multirow{3}{0.2cm}{LN}
			&$0.15$&$4.272e-08$&$4.204e-05$&$3$&$2.879e-06$&$3.513e-04$&$33$\cr
			&$0.2$&$3.537e-08$&$3.852e-05$&$3$&$1.838e-06$&$2.833e-04$&$36$\cr
			&$0.25$&$1.483e-08$&$2.483e-05$&$3$&$2.255e-06$&$2.946e-04$&$39$\cr
			\hline
			\multirow{3}{0.2cm}{GN}
			&$0.15$&$1.736e-09$&$8.499e-06$&$6$&$2.924e-06$&$3.573e-04$&$34$\cr
			&$0.2$&$2.094e-09$&$9.476e-06$&$9$&$1.377e-06$&$2.429e-04$&$38$\cr
			&$0.25$&$1.729e-06$&$2.645e-04$&$8$&$3.639e-06$&$3.940e-04
			$&$39$\cr
			\hline
			\multirow{3}{0.2cm}{UN}
			&$0.03$&$4.010e-08$&$3.227e-05$&$14$&$3.599e-06$&$3.906e-04$&$120$\cr
			&$0.04$&$2.018e-08$&$2.788e-05$&$9$&$1.292e-05$&$6.954e-04$&$226$\cr
			&$0.05$&$2.974e-10$&$3.604e-06$&$8$&$3.432e-01$&$1.129e-01$&$295$\cr
			\hline
		\end{tabular}\label{tab1}
	}
\end{table}

\subsection{Performance comparisons with IAGENR-Lq}
\begin{sidewaystable}[htbp]
\centering {\scriptsize\caption{Numerical resuts of PMM-SSN, ADMM, and IAGENR-Lq.}
\begin{tabular}{lc|ccc|ccc|ccc}
	\hline
	\multicolumn{2}{c|}{} &
	\multicolumn{3}{c|}{PMM-SSN} &
	\multicolumn{3}{c|}{ADMM}& \multicolumn{3}{c}{IAGENR-Lq}\cr
	\hline
	Matrix(Dim) & K & MSE  & RE & Iter & MSE & RE & Iter & MSE  & RE & Iter\cr
	\hline
	\multirow{3}{0.2cm}{800$\times$400}
	&$5$&$4.308e-07$&$1.660e-04$&$7$&$7.330e-07$&$2.154e-04$&$31$&$2.538e-06$&$4.037e-04$&$7$\cr
	&$7$&$2.157e-07$&$1.118e-04$&$8$&$2.434e-06$&$3.656e-04$&$29$&$2.620e-06$&$3.963e-04$&$7$\cr
	&$9$&$1.211e-06$&$1.807e-04$&$20$&$1.477e-06$&$2.822e-04$&$40$&$3.220e-06$&$4.036e-04$&$7$\cr
	\hline
	\multirow{3}{0.2cm}{1000$\times$300}
	&$5$&$4.252e-07$&$9.523e-05
	$&$8$&$1.053e-05$&$6.560e-04$&$36$&$1.177e-05$&$7.062e-04$&$9$\cr
	&$10$&$2.094e-09$&$9.476e-06$&$8$&$1.377e-06$&$2.429e-04$&$37$&$1.113e-05$&$6.962e-04$&$9$\cr
	&$15$&$1.069e-08$&$1.829e-05$&$10$&$3.207e-06$&$3.648e-04$&$40$&$1.071e-05$&$6.795e-04$&$10$\cr
	\hline
	\multirow{3}{0.2cm}{2000$\times$500}
	&$6$&$5.531e-07$&$3.045e-04$&$8$&$1.835e-06$&$5.491e-04$&$36$&$4.964e-06$&$8.922e-04$&$8$\cr
	&$8$&$8.229e-07$&$3.631e-04$&$9$&$1.346e-06$&$4.659e-04$&$38$&$5.392e-06$&$9.186e-04$&$8$\cr
	&$10$&$7.9452e-07$&$2.4117e-04$&$9$&$1.543e-06$&$4.158e-04$&$38$&$6.004e-06$&$ 8.538e-04$&$10$\cr
	\hline
\end{tabular}\label{tab2}
}
\end{sidewaystable}
In this section, we  assess the superiority of the proposed model \eqref{lp} and evaluage the efficiency of PMM-SSN and ADMM by comparing them to the state-of-the-art algorithm IAGENR-Lq developed by Zhang et al. \cite{lqnet}.
The data utilized in this part is identical to that in  Section \eqref{set41}, but here we take the correlation coefficient $\kappa$ as $0.3$.
It is important to note that the proposed model \eqref{lp} is novel, and no existing algorithms can be employed for comparison. Therefore, we test it against the least squares estimation model \eqref{net1} to evaluate the superiority of \eqref{lp}.
Additionally, in this comparison test, we specifically focus on $r = 2$, indicating that both models being evaluated are designed to handle Gaussian noise.
As evident by the previous tests, we take $q=0.5$ in the generalized elastic-net penalty, and choose $\lambda_1=0.2$ and $\lambda_2=1e-4$  for the tuning parameters.
Furthermore, we select $\lambda_1 = 1e-8$ and $\lambda_2 = 1e-4$ for IAGENR-Lq, as these values have consistently demonstrated their effectiveness in experiments' preparations.
In this test, we consider various values of $n$ and $ p$, along with different levels of sparsity $ K$. The results for MSE, RE, and Iter are presented in Table \ref{tab2}.

Upon analyzing the first two columns of Table \ref{tab2}, which present the MSE and RE values for each algorithm, we observe that PMM-SSN achieves the lowest values, while IAGENR-Lq records the highest.
We see that PMM-SSN and ADMM slightly outperform the IAGENR-Lq in terms of regression accuracy.
In summary, these results support the conclusion that our proposed model  \eqref{lp} improve the quality of regression quality derived from model \eqref{net1}.

\subsection{Test on real data}  
In this section, we conduct numerical comparisons using the ``pyrim" and ``bodyfat" instances from the LIBSVM dataset, which are commonly utilized for regression purposes.
These datasets can be obtained from \href{https://www.csie.ntu.edu.tw/~cjlin/libsvmtools/datasets/binary.html}
{https://www.csie.ntu.edu.tw/~cjlin/libsvmtools/datasets/binary.html}.
To address the high-dimensional settings, we use the approach described in \cite{polynomial}, which expands the original data features with polynomial basis functions.
For example, the suffix ``3" in data ``pyrim3" indicates that a 3rd degree polynomial is used to generate the basis function.
In this real data test, we terminate the iterative process when   ``$\eta_1 \leq 1e -4$" is achieved. 
The computational results of PMM-SSN and ADMM under different noise interferences are listed in Table \ref{tab3}. 
Since the true sparse coefficients are unknown for these real data examples,  the results of RE and MSE are not included in this table.
In PMM-SSN, we use the fixed parameters' values: $\mu=1e-2$, $\rho=0.9$, $\varrho=0.1$, $\delta=0.5$, and $\nu=1e-6$, while in ADMM,   we take $\sigma=1e-3$ and $\tau=1.618$.
To evaluate performance of both algorithms, we utilize three measurements: the final convergence value $\eta_1$ (RES), the number of iterations (Iter), and CPU computation time (Time) in seconds. 

\begin{table}[htbp]
	\centering {\footnotesize\caption{Numerical results of  PMM-SSN and ADMM on real data.}
		\begin{tabular}{lccc|ccc|ccc}
			\hline
			\multicolumn{4}{c|}{} &
			\multicolumn{3}{c|}{PMM-SSN} &
			\multicolumn{3}{c}{ADMM}\cr
			\hline
			Data(Dim) & Noise & $\lambda_1$ & $\lambda_2$ & Iter & RES & Time & Iter & RES & Time\cr
			\hline
			\multirow{3}{0.2cm}{pyrim3
				74;3654}
			&LN& $0.3$&$1e-2$&$4$ & $9.050e-02$ & $0.2309$ & $36$ & $1.277e-01$ & $5.0322$\cr
			&GN& $0.3$&$1e-2$&$20$ & $1.109e-01$ & $1.0018$ & $36$ & $1.277e-01$ & $4.9444$\cr
			&UN& $0.1$&$1e-1$&$80$ & $6.950e-02$ & $4.7067$ & $162$ & $1.043e-01$ & $18.0183$\cr
			\hline
			\multirow{3}{0.2cm}{bodyfat5
				252;11628}
			&LN& $0.2$&$2e-3$&$3$ & $1.720e-02$ & $1.6133$ & $15$ & $2.842e-01$ & $24.5491$\cr
			&GN& $0.3$&$1e-2$&$6$ & $4.070e-02$ & $3.0155$ & $48$ & $1.816e-01$ & $78.6079$\cr
			&UN& $0.2$&$1e-3$&$7$ & $2.230e-02$ & $3.6474$ & $11$ & $2.769e-01$ & $20.0615$\cr
			\hline
		\end{tabular}\label{tab3}
	}
\end{table}

Based on the data presented, we can analyze the efficiency of the PMM-SSN and ADMM.
We see that PMM-SSN shows better performance across various parameter settings. For instance, in the LN case with parameters set to $\lambda_1=0.3$ and $\lambda_2=1e-2$, the RES value derived by PMM-SSN is $9.050e-02$, but the one by ADMM is only $1.277e-1$.
In the most cases, PMM-SSN consistently demonstrates better efficiency, making it a preferable choice for the task of sparse linear regression.
The ``Time" column shows that PMM-SSN generally has shorter computing times compared to ADMM. 
For instance, in both the LN and GN cases, PMM runs significantly faster. In contrast, ADMM takes longer computing time, especially with higher dimensional data. Overall, PMM-SSN's efficiency in computing time once again makes it the preferred choice for sparse linear regression.

\section{Conclusions}\label{con5}

In this paper, we proposed a generalized elastic-net model for sparse linear regression that utilizes a $\ell_r$-norm as the loss function and incorporates a $\ell_q$-norm in the elastic-net penalty.
We proved that the local minimizer of the proposed model is  a generalized first-order stationary point (Thorem \ref{pro1}), and then derived the lower bound for the nonzero entries of this generalized first-order stationary point (Theorem \ref{the1}). 
Furthermore, utilizing an $\epsilon$-approximation function $h_{u_{\epsilon}}(\cdot)$ from Lu \cite{lu}, we proposed an $\epsilon$-approximation model and proved that if the $\epsilon>0$ is chosen properly, the local minimizer of the proposed model is a also a generalized first-order stationary point of this $\epsilon$-approximation model (Theorem \ref{the2}).

For practical implementation, we proposed two efficient algorithms within an iterative reweighted framework to solve the resulting $\epsilon$-approximation model: one is the easily implementable first-order algorithm ADMM, and the other is the faster PMM-SSN algorithm, which leverages second-order information.
To demonstrate the effectiveness of the proposed algorithms, we conducted a series of numerical experiments using both simulated data and a real dataset. We compared their performance against the state-of-the-art algorithm, IAGENR-Lq. The results indicated that both algorithms are effective in addressing high-dimensional sparse linear regression problems and exhibited strong robustness across different types of noise.
Our findings revealed that while ADMM performed well in numerical experiments due to its straightforward implementation, the PMM-SSN demonstrated superior performance in tackling complex problems, particularly when second-order information plays a crucial role in improving solution accuracy.
Consequently, for high-dimensional sparse linear regression problems that demand higher precision, the PMM-SSN may offer a more effective choice.

\bmhead{Acknowledgements}

The work of Yanyun Ding is supported by the Shenzhen Polytechnic University Research Fund (Grant No. 6024310021K). The work of Peili Li is supported by the National Natural Science Foundation of China (Grant No. 12301420).
The work of Yunhai Xiao is supported by the National Natural Science Foundation of China (Grant No. 12471307 and 12271217), and the National Natural Science Foundation of Henan Province (Grant No. 232300421018).

\section*{Conflict of interest statement}

The authors declare that they have no conflict of interest.

\bibliography{hcfs}

\end{document}